\documentclass[accepted]{uai2026} 
                        
\usepackage[american]{babel}

\usepackage{natbib} 
\usepackage{mathtools} 
\usepackage{amsfonts} 
\usepackage{amsthm}
\usepackage{booktabs} 
\usepackage{tikz} 

\hypersetup{colorlinks=false,pdfborder={0 0 0}}

\title{Partially Observed Structural Causal Models}

\author[1,2]{\href{mailto:<turan.orujlu@tuebingen.mpg.de>?Subject=Your UAI 2026 paper}{Turan Orujlu}{}}
\author[3]{Jordan Matelsky}
\author[1]{Martin V. Butz}
\author[2,4,5]{Charley M. Wu}
\author[3]{Konrad P. Kording}
\affil[1]{%
    University of T{\"u}bingen
}
\affil[2]{%
    MPI for Biological Cybernetics
}
\affil[3]{%
    University of Pennsylvania
  }
\affil[4]{%
    TU Darmstadt
  }
\affil[5]{%
    Hessian.AI
  }
\newtheorem{theorem}{Theorem}[section]
\newtheorem{remark}[theorem]{Remark}
\newtheorem{definition}[theorem]{Definition}
\newtheorem{proposition}[theorem]{Proposition}
\newtheorem{lemma}[theorem]{Lemma}
\newtheorem{assumption}[theorem]{Assumption}

\newcommand{\R}{\mathbb{R}}

\renewcommand{\P}{\mathbb{P}}

\newcommand{\Bcal}{\mathcal{B}}
\newcommand{\Fcal}{\mathcal{F}}

\newcommand{\Vcal}{\mathcal{V}}

\newcommand{\Ocal}{\mathcal{O}}

\newcommand{\betatilde}{\tilde{\beta}}
\newcommand{\Vtilde}{\tilde{V}}
\newcommand{\Pa}{\text{Pa}}
\newcommand{\doop}{\text{do}}

\newcommand{\cH}{\mathcal{H}}

\begin{document}
\maketitle

\begin{abstract}
  Here we introduce Partially Observed Structural Causal Models (POSCMs) as an extension of structural causal models (SCMs) to settings where upstream contexts co-determine both the interaction structure and downstream mechanisms on observed variables. POSCMs thus provide a self-contained causal modeling framework for endogenous graphs, allowing for an intervention hierarchy spanning node- and edge-level contexts and endogenous variable interventions. To define edge interventions, we separate node mechanisms into edge-local transmission channels that can be modified without changing the source node or the rest of the target mechanism. We provide an identifiability theory that clarifies which intervention families would suffice to disentangle structure formation from mechanisms. We then empirically validate these theoretical results in two external simulators: a biophysically detailed virtual human retina and a gene-regulatory analogue. The experiments reproduce non-identifiability under latent context, expose  structure-mechanism confounding under latent edges, and recover pathway-level input-output relationships under targeted interventions, consistent with our positive Markov kernel identifiability results. Together, POSCMs provide an intervention-oriented framework for causal systems in which contexts, graph structure, mechanisms, and measurements are jointly generated and only partially observed. 
\end{abstract}

\section{Introduction}\label{sec:intro}

Many biological, social, and engineered systems generate networks whose structure depends on upstream contexts, which in turn, shape downstream dynamics on nodes. Structural Causal Models (SCMs) typically treat graphs as fixed/exogenous \citep{pearl2009, Peters2017}. However, in many real-world systems, whether one variable causally influences another is not a fixed background fact, but is itself determined by upstream processes and can be altered by interventions. A genetic perturbation may abolish a synapse between two neurons. A policy change may sever a regulatory link between two companies. In such settings, we need causal semantics in which the existence of edges is itself an endogenous mechanism, allowing them to be reasoned about, intervened on, and identified, rather than assuming a static scaffold on which mechanisms are defined. Furthermore, causal discovery methods based on SCMs typically assume we can directly observe system variables, potentially based on some knowledge of the network, and infer a single, context-independent mechanism for each variable. While powerful, the assumptions behind classical SCMs often fail to capture the complexity of real-world systems.

Consider neuroscience, where we aim to infer neural circuitry from indirect, modality-specific measurements. For example, calcium imaging does not reveal latent neural activity directly: it reports a noisy fluorescence proxy filtered by
calcium dynamics, sensor expression, optical noise, and the sampled
field of view. Other readouts expose different parts of the same
circuit: electrophysiology records activity for selected cells without the full synaptic graph, whereas anatomical reconstruction can reveal wiring or morphology without simultaneous activity. Thus the available data are noisy measurements of a neuron's activity (mechanism) and connectivity (structure) that depend on its type and local environment (context), which are themselves often noisily observed. Crucially, this neuronal type is often shaped by the connectivity itself during development, creating an entanglement between context and structure. We thus need to solve a nontrivial inference about context, mechanism, and structure, where each is partially observed. Similarly, the transition from Markov decision processes (MDPs) to partially observed MDPs (POMDPs) fundamentally changes the problem structure, requiring new solution concepts once the agent no longer has direct access to the full state \citep{KAELBLING199899}. Here, causal modeling faces an analogous gap when structure, context, and mechanisms are only partially observed. Standard SCMs correspond to the ``fully observed'' regime, assuming a fixed, known graph with directly measurable variables. Thus, we require a partially observed counterpart for describing more real-world causal systems.

\begin{figure}[t]
    \centering
    \includegraphics[width=1.0\linewidth]{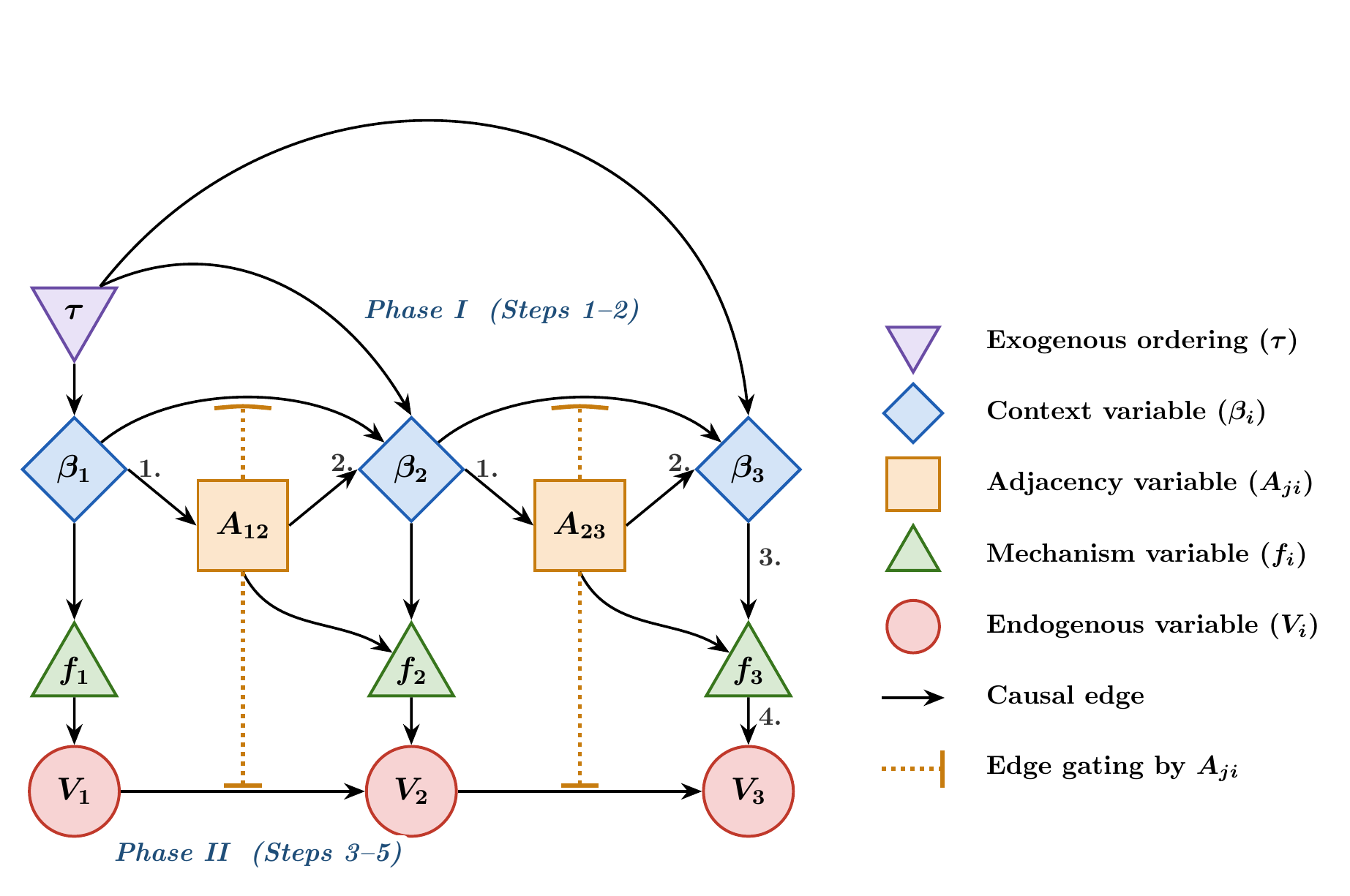}
    \caption{Expanded DAG for a 3-node POSCM under ordered generation $\tau = (1,2,3)$. Labels 1--4 mark (1) structure formation, (2) context generation, (3) mechanism assignment, and (4) endogenous-value generation. Squares $A_{ji}$ are variables in the augmented DAG because the graph is generated aleatorically by the structure kernel rather than fixed in advance. A dashed gate labelled by $A_{ji}$ indicates that the $j\to i$ value channel is active only when $A_{ji}=1$; when $A_{ji}=0$, the corresponding channel contribution is zero. Solid arrows are causal arrows in the augmented DAG, whereas the induced $V$-edges are the realized value channels selected by $A$.}
 \label{fig:poscm_hierarchy}
\end{figure}

Here we introduce \emph{Partially Observed Structural Causal Models (POSCMs)} as a formal framework to model such cases: upstream contexts $\beta$ parametrize both (i) the stochastic mechanism that forms directed edges $A$ and (ii) the value mechanisms that generate observed variables $V$ along the realized graph. To our knowledge, no prior causal framework treats structure formation as an endogenous causal mechanism driven by upstream contexts while also providing an explicit intervention vocabulary that separates changes to context, structure, and value mechanisms. Moreover, none embeds these capabilities within an ordered-generation semantics that accommodates partial observability of the generating context variables.

\paragraph{Contributions.}
We (i) formalize POSCMs via ordered generation, (ii) show that SCMs arise as a special case, (iii) use a message-augmented POSCMs to define edge interventions (iv) provide positive Markov kernel identifiability results and barriers under certain readout and coverage assumptions, and (v) validate these theoretical results in a virtual retina simulator.

For related work, see Appendix~\ref{sec:related_work}.

\section{POSCM formalism}
\label{sec:poscm}

Context affects causal structure (e.g., a cone photoreceptor's type determines how it processes information and whether it synapses onto an ON or OFF bipolar cell) , while causal structure in turn affects the generative model for context (the pattern of synaptic inputs a bipolar cell receives influences its functional identity as ON- or OFF-type). The relevant generative models are thus non-trivially entangled. To causally model such situations without introducing cycles (see Remark~\ref{rem:acyclicity-ordered-generation}), we introduce a formalism based on \textit{ordered generation} \citep{Simon1977, doi:10.1126/science.286.5439.509}, ensuring the overall generative process remains acyclic.

\begin{definition}[POSCM with Ordered Generation]
\label{def:poscm}
A POSCM is a tuple $M = (A, \beta, V, U, \alpha, \phi, f, \Gamma, \tau)$, characterized by an adjacency variable ($A$), context variables ($\beta$), endogenous variables ($V$), exogenous variables ($U$), a structure kernel ($\alpha$), context variable mechanisms ($\phi$), endogenous variable mechanisms ($f$), mechanism operator ($\Gamma$), and an exogenous total ordering $\tau$ over the nodes $V=\{V_1, \dots, V_N\}$.

The generative process proceeds sequentially according to the order $\tau$. Without loss of generality, assume $\tau(i)=i$.

\textbf{Phase I: Sequential Generation of Structure and Context}

For $i = 1$ to $N$:
\begin{enumerate}[leftmargin=*, noitemsep]
    \item \textbf{Structure Formation (Incoming Edges):} For each potential parent $j < i$, the edge $A_{ji}$ is drawn from a structure kernel whose conditional distribution depends on the (already generated) source context $\beta_j$:
    $A_{ji} = \alpha(\beta_j, U^{A}_{ji})$. Let $\Pa_A(i) = \{j : A_{ji}=1\}$ be the realized parent set.
    \item \textbf{Context Generation:} The context $\beta_i$ is generated based on the contexts of its realized parents:
    $\beta_i = \phi_i(\{\beta_j : j \in \Pa_A(i)\}, U^\beta_i)$.
\end{enumerate}

This interleaved process establishes a valid Directed Acyclic Graph (DAG) over the expanded set of variables (e.g., $\beta_1 \to A_{12} \to \beta_2 \to \dots$, see Fig.~\ref{fig:poscm_hierarchy}), resolving any circularity between $A$ and $\beta$ and capturing the co-evolution of structure and identity.

\textbf{Phase II: Mechanisms and Variables}

After $(A, \beta)$ are fully generated:
\begin{enumerate}[leftmargin=*, noitemsep, resume]
    \item \textbf{Mechanism Assignment:} The mechanism $f_i$ is drawn conditioned on the local context $\beta_i$ \emph{and} the realized parent set $\Pa_A(i)$. The parent set defines the input signature (domain) of the function.
    $f_i = \Gamma(\beta_i, \Pa_A(i), U^f_i)$.
    \item \textbf{Endogenous Variable Generation (SCM):} $V_i = f_i(\{V_j : j \in \Pa_A(i)\}, U^V_i)$.
\end{enumerate}

\end{definition}

$M$ thus specifies the \emph{data-generating causal process}.
What can be \emph{observed} is specified separately by a measurement model (Sec.~\ref{sec:measurement-model}), and what can be \emph{manipulated} is specified by an intervention family (Sec.~\ref{sec:interventions}). For each $i$, the vector $A_i=(A_{1i},\ldots,A_{i-1,i})$ encodes the realized parent set $\Pa_A(i)=\{j<i:A_{ji}=1\}$. Thus $A$ is not merely a drawing of the graph, but is a random variable in the augmented causal model, generated by the structure kernel. Saying that $\Pa_A(i)$ defines the \emph{input signature} of $f_i$ means that, after $A_i$ is realized, $f_i$ is a map from the realized parent-value space $\prod_{j\in\Pa_A(i)}\mathcal{V}_j$ and exogenous noise to $\mathcal{V}_i$. Equivalently, $V_i$ can depend functionally only on the variables $V_j$ for which $A_{ji}=1$. We write $\mathcal{B}_i$ and $\mathcal{V}_i$ for the context and endogenous-value spaces of node $i$, and use $\mathcal{B}$ and $\mathcal{V}$ when a common space is assumed for notational simplicity.

\subsection{Measurement model}
\label{sec:measurement-model}

The causal model $M$ does not, by itself, specify what is observed.
Instead, we couple $M$ to a (possibly noisy) measurement model $\Ocal$ that determines the available data.

\begin{definition}[Measurement model]
\label{def:measurement-model}
Let $M$ be a POSCM with generated variables $(A,\beta,V)$. A measurement model for $M$ is a tuple $\Ocal = (\Ocal^{A}, \Ocal^{\beta}, \Ocal^{V}, U^{\Ocal})$ specifying observation channels
\begin{align}
\tilde A &= \Ocal^{A}(A, U^{\Ocal}_A), &
\betatilde &= \Ocal^{\beta}(\beta, U^{\Ocal}_\beta), &
\Vtilde &= \Ocal^{V}(V, U^{\Ocal}_V),
\end{align}
where $U^{\Ocal}=(U^{\Ocal}_A,U^{\Ocal}_\beta,U^{\Ocal}_V)$ denotes measurement noise.
The observable $\tilde Z$ may include any subset of $(\tilde A,\betatilde,\Vtilde)$ (e.g., adjacency observed but contexts latent).
\end{definition}

\paragraph{Convention.}
Throughout, identifiability statements treat $\Ocal$ as fixed and known (or identifiable within a specified parametric family), and ask which parts of the causal model $M$ are determined from interventional distributions over $\tilde Z$.
This formalism highlights a fundamental entanglement: the mechanism cannot be defined independently of the structure, since the structure determines the mechanism's input domain.

\begin{remark}[Reduction to SCMs]
\label{rem:scm}
A POSCM reduces to a standard SCM variant (e.g., heterogeneous mechanism SCMs) iff the adjacency matrix variable $A$ and the mechanism variables $f$ are fixed to $A^*$ and $f^*$ respectively, i.e., $A \sim \delta_{A^*}, f \sim \delta_{f^*}$ where $\delta$ is the Dirac delta function.
\end{remark}

POSCMs specify an ordered stochastic process $\tau\to\beta\to(A,f)\to V$, where context-dependent kernels generate adjacency and mechanisms before values are generated. The randomness of $\alpha$ and $\Gamma$ is aleatoric: it describes variation across units, developmental outcomes, animals, or perturbational regimes, not merely uncertainty about one hidden fixed graph. A standard SCM over $V$ is appropriate when the graph and mechanisms are fixed, or when the target is a within-context effect on a measured adult network. A mixture of fixed-graph SCMs is recovered as a special case with discrete observed context and degenerate $\alpha,\Gamma$ within each component. POSCMs are useful when the scientific target is the measure by which context generates structure and mechanisms, e.g., $\mu_j^{\beta\text{-do}}(b)=\P(A_{j,>j}\mid\doop(\beta_j=b))$, and when data should be pooled across local, latent, noisy, or intervention-shifted contexts rather than enumerating whole-graph mixture components.

This viewpoint is also compatible with the independent  causal mechanisms intuition. In the common factorized case $\Gamma(f\mid A,\beta)=\prod_i \Gamma_i(f_i\mid \beta_i,\Pa_A(i))$, assigned mechanisms are independent across nodes conditional on their realized contexts and parent sets. They can nevertheless be marginally dependent because shared context $\beta$ co-determines both wiring $A$ and mechanism assignment $f$. Thus, autonomy lives at the level of the structural modules ($\alpha$, $\phi$, $\Gamma$) and their exogenous noises, not necessarily at the level of realized $(A,f)$. The SCM reduction above is the degenerate case $\alpha(\cdot\mid\beta)=\delta_{A^*}$ and $\Gamma_i(\cdot\mid\beta_i,\Pa_A(i))=\delta_{f_i^*}$, where there is no nontrivial stochastic dependence among assigned mechanisms; the substantive SCM assumption is modularity, i.e., interventions replace one structural equation while leaving the others fixed.

\begin{remark}[Acyclicity and ordered generation]
\label{rem:acyclicity-ordered-generation}
Ordered generation combines two established ideas: the topological-order forward sampling standard in acyclic SCMs, where each variable is generated as a function of earlier variables and exogenous noise \citep{Simon1977}, and the sequential node-arrival construction used in graph generative models \citep{doi:10.1126/science.286.5439.509}. It extends both by interleaving edge formation with context propagation under causal intervention semantics; the former assumes a fixed graph, while the latter lacks an intervention calculus.

The ordered-generation semantics assumes an exogenous total order $\tau$ over nodes, which rules out instantaneous feedback loops in the one-shot (non-temporal) model. This assumption is shared with \emph{DAG-SCMs} \citep{pearl2009} and most DAG-based causal discovery methods \citep{spirtes2000causation, chickering2002optimal, Peters2017},
and it ensures that the induced augmented causal graph over $(\beta,A,V)$ is acyclic
(Fig.~\ref{fig:poscm_hierarchy}).

Importantly, $\tau$ does not need to be interpreted as an arbitrary modeling artifact:
in settings with an intrinsic temporal or developmental sequence, $\tau$ can be anchored
to observed time. For example, in our virtual retina experiments (Sec.~\ref{sec:experiments}),
the laminar circuit provides a biologically grounded ordering (PR$\to$HZ$\to$BC$\to$AC$\to$RGC),
which we use as $\tau$ at the layer level.

When such an intrinsic order is unavailable, $\tau$ should be viewed as part of the POSCM
specification; different choices of $\tau$ may correspond to different models and thus
different counterfactual semantics, analogous to how different SCM mechanism specifications
can agree observationally yet disagree counterfactually.
\end{remark}

In the one-shot ordered model, edges are generated before Phase-II values. Allowing same-round dependence $A_{ji}\leftarrow V_j$ would entangle graph formation with value propagation unless temporal or cyclic semantics were added. POSCMs therefore place pre-Phase-II state variables in $\beta$: cell type, morphology, chromatin state, developmental history, past activity, etc. Activity-dependent connectivity can still be modeled by including past activity in $\beta$, or dynamically by time-unrolling, e.g. $A^{t+1}_{ji}\leftarrow V^t_j$.

\section{Message-augmented POSCM}
\label{sec:message-aug}

A \emph{message} is an edge-local transmission channel. In a neural circuit, $A_{ji}$ is the presence of a synapse, $H^V_{i\leftarrow j}(V_j)$ is the edge-local dendritic or synaptic transformation of the presynaptic activity, i.e., the message, and $F_i$ is the postsynaptic integration map. Thus, message augmentation is not introduced for its own sake: it is the additional causal structure needed to ask whether one dyadic channel $j\to i$ was changed while the source node $V_j$, the rest of $f_i$, and all other incoming channels remain fixed.

This places edge-message interventions between node interventions and arbitrary mechanism replacement. Node interventions change all outgoing uses of a source value, whereas an edge-message intervention changes only one channel. The decomposition is therefore part of the modeling interface: when messages correspond to experimentally addressable channels, edge-level counterfactuals are meaningful; when only the node-level kernel is meaningful, POSCM kernel targets remain well-defined without identifying a particular message gauge.

\begin{definition}[Message-augmented POSCM]
\label{def:msg-poscm}
Fix an ordering $\tau$ and, for each node $i$, let $m_i := |\{j: j<i\}|=i-1$ denote the number of \emph{potential} parents under ordered generation.
A POSCM is \emph{message-augmented} if for each node $i$ there exists a finite message dimension $d_i$ (fixed for that node) and functions
\begin{equation*}
    H^{\beta}_{i\leftarrow j}:\Bcal \to \R^{d_i},
    \qquad
    H^{V}_{i\leftarrow j}:\Vcal \to \R^{d_i},
\end{equation*}
together with aggregation maps
\begin{equation*}
    \Phi_i:\R^{d_i \times m_i}\to \mathcal{B},
    \qquad
    F_i:\R^{d_i\times m_i} \to \Vcal,
\end{equation*}
such that (with the convention $M_{i\leftarrow j}\equiv 0$ when $A_{ji}=0$)
\begin{align}
M^{\beta}_{i\leftarrow j} &= A_{ji}\,H^{\beta}_{i\leftarrow j}(\beta_j), &
\beta_i &= \Phi_i\!\left(\{M^{\beta}_{i\leftarrow j}\}_{j<i},\,U^\beta_i\right), \label{eq:msg-chi} \\
M^{V}_{i\leftarrow j} &= A_{ji}\,H^{V}_{i\leftarrow j}(V_j), &
V_i &= F_i\!\left(\{M^{V}_{i\leftarrow j}\}_{j<i},\,U^V_i\right). \label{eq:msg-X}
\end{align}
\end{definition}

\begin{remark}[Fixed message dimension]
The message codomain $d_i$ is \emph{fixed for node $i$} and does not vary with the realized parent set size $|\mathrm{pa}(i)|$.
This resolves the non-operational ``variable-dimension'' issue that arises if one defines an $i\leftarrow j$ intervention using a representation whose dimension depends on which other parents happen to be present in a given world.
\end{remark}

\paragraph{Existence via Kolmogorov--Arnold--Sprecher.}
Definition~\ref{def:msg-poscm} assumes a dyadic message parameterization. For broad continuous mechanisms on compact domains, such parameterizations exist by the Kolmogorov--Arnold--Sprecher (KAS) representation theorem; Appendix~\ref{app:kas_density} gives the construction and shows it is dense in Lebesgue space $L^p(\mu)$. We use KAS only as an existence result. The interventions in Sec.~\ref{sec:interventions} are defined directly on the message primitives $(H^\beta,H^V)$, not on a unique KAS coordinate system.

\begin{remark}[Representational non-uniqueness as a gauge]
\label{rem:kas-gauge}
KAS representations are generally non-unique: different internal representations of the same parent-to-child mechanism can induce the same overall mechanism.
Accordingly, identifiability of \emph{message coordinates} should be understood either (i) relative to a fixed message parameterization (Def.~\ref{def:msg-poscm}), or (ii) ``up to'' the induced representation gauge. A convenient way to formalize this is to view the gauge as the class of internal message reparameterizations that preserve the induced parent-to-child mechanism; for example, any bijection of the message space that is compensated inside $F_i$ (and similarly inside $\Phi_i$) yields an observationally equivalent representation.
For estimation, kernel targets such as $\P(V_i\mid V_{\Pa_A(i)},\beta_i,\Pa_A(i))$ are gauge invariant. In contrast, dyadic message estimates and edge-message interventions require either a fixed gauge, a physical channel anchoring, or reporting only gauge-invariant slot-response kernels.
\end{remark}

\section{Intervention Hierarchy}
\label{sec:interventions}

POSCMs admit interventions that act on context/value (i) nodes and (ii) edges. To make edge interventions operational, we define them on the \emph{message primitives} of the message-augmented representation (Def.~\ref{def:msg-poscm}).

\subsection{Primitive intervention types}

\begin{definition}[Node interventions]
\label{def:node-interventions}
A \emph{$\beta$-node intervention} $\doop(\beta_j= \widetilde b)$ is a Phase-I intervention: it replaces the structural equation for $\beta_j$ by the constant $\widetilde b$ and then continues the ordered generative process. Consequently, it may change downstream contexts, edge formation, and mechanism assignment through $\alpha,\phi$, and $\Gamma$.

A \emph{$V$-node intervention} $\doop(V_j=\widetilde v)$ is a Phase-II intervention: after $(A,\beta)$ and the mechanisms have been realized, it replaces the structural equation for $V_j$ by the constant $\widetilde v$ and propagates through the realized Phase-II SCM, leaving Phase-I quantities and assigned mechanisms unchanged.
\end{definition}

\begin{definition}[Edge-message interventions]
\label{def:edge-message-interventions}
In a message-augmented POSCM (Def.~\ref{def:msg-poscm}), a \emph{$\beta$-edge intervention} on dyad $s\to t$ replaces the context message function on that channel,
\begin{equation*}
    \doop_{\beta}(s\to t;\,\widetilde H^{\beta}_{t\leftarrow s})
:\quad
H^{\beta}_{t\leftarrow s}\leftarrow \widetilde H^{\beta}_{t\leftarrow s},
\end{equation*}
leaving all other components of the POSCM unchanged.
Similarly, a \emph{$V$-edge intervention} on dyad $j\to i$ replaces the value message function,
\begin{equation*}
    \doop_{V}(j\to i;\,\widetilde H^{V}_{i\leftarrow j})
:\quad
H^{V}_{i\leftarrow j}\leftarrow \widetilde H^{V}_{i\leftarrow j}.
\end{equation*}
If $A_{st}=0$ (resp.\ $A_{ji}=0$) in a realized world, then the corresponding message is identically zero by convention (Def.~\ref{def:msg-poscm}), so a message intervention is a no-op.
\end{definition}
Appendix Table~\ref{tab:intervention-examples} summarizes scientific readings of the primitive interventions.

\paragraph{Relation to edge/path/soft interventions in SCMs.}
In standard SCMs, ``edge'' or ``path'' interventions can be viewed as modifying a single parent contribution while leaving the rest of the mechanism fixed \citep{shpitser2016identification}.
Message-augmented POSCMs make this idea more ``surgical'': an edge-message intervention replaces a single dyadic channel $H_{i\leftarrow j}$ while leaving all other mechanisms unchanged (Def.~\ref{def:edge-message-interventions}).
This instantiates the usual notion of mechanism-level (soft) interventions, but in a setting where structure itself is stochastic and intervention-sensitive.

\begin{figure}[t]
    \centering
    \includegraphics[width=1.0\linewidth]{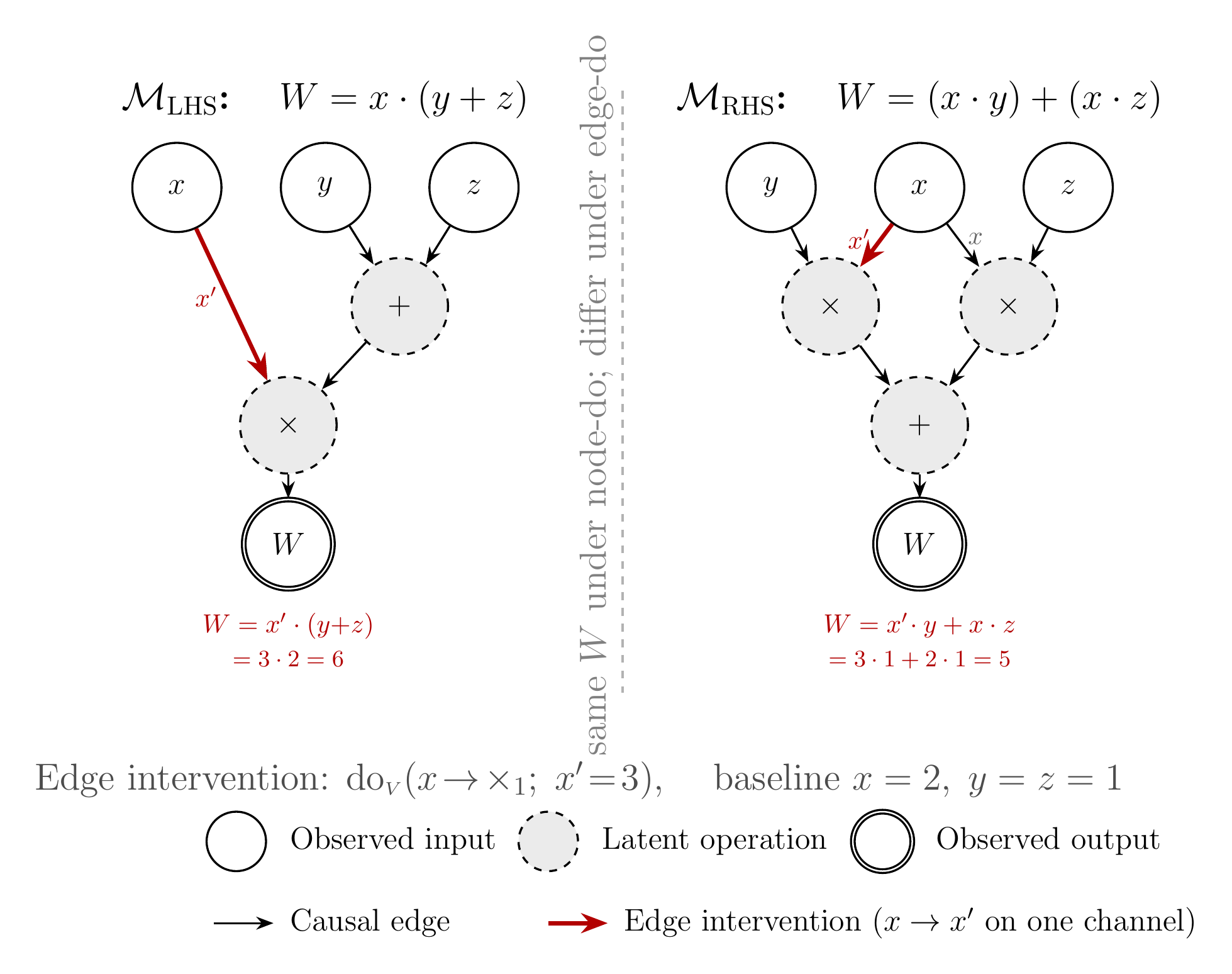}
    \caption{Distributive law toy example.}
 \label{fig:distributive_toy}
\end{figure}

\paragraph{Toy example (distributive law): why edge interventions form a strictly stronger tier (see Fig.~\ref{fig:distributive_toy}).}
Consider two deterministic causal models with inputs $(x,y,z)$ and a single observed output $W$.
Model $\mathcal{M}_{\mathrm{LHS}}$ computes $W=x\cdot(y+z)$ via an internal addition node and a single multiplication node, while model $\mathcal{M}_{\mathrm{RHS}}$ computes $W=(x\cdot y)+(x\cdot z)$ via two internal multiplications.
Suppose the intermediate arithmetic nodes are unobserved and cannot be intervened upon, so the observation model reveals only $W$.
Then for every node intervention $\doop(x=a,y=b,z=c)$ the two models agree on the observed outcome $W$, hence they are $(\mathcal{I}_{\text{node}},\Ocal)$-equivalent for this restricted interface: node interventions identify the input-output kernel, but not the internal decomposition.
In contrast, an edge intervention that perturbs only \emph{one} channel of $x$ distinguishes them.
For instance, replace only the input carried on the edge $x\to \ast_1$ by a value $x'$ while leaving the other occurrence of $x$ unchanged; then $\mathcal{M}_{\mathrm{LHS}}$ yields $W=x'(y+z)$ whereas $\mathcal{M}_{\mathrm{RHS}}$ yields $W=x'y+xz$, which differ for generic assignments (e.g., $x=2,x'=3,y=z=1$ gives $6$ vs.\ $5$).
This illustrates why edge interventions are a genuinely stronger primitive: they isolate one causal channel without globally changing the source node.
In POSCMs, the analogous separation underlies the gap between kernel identification (Theorem~\ref{thm:complete-id}) and identifiability of dyadic message primitives or edge-intervention counterfactuals under partial observability (Lemma~\ref{lem:struct_mech_confounding}, Theorem~\ref{thm:msg-id}).

\subsection{Structural endogeneity}

\begin{definition}[Per-source supervising measure]
\label{def:supervising-measure}
For a regime $r$ (baseline $0$ or an intervention regime  $\mathrm{int}$), let $\P^r$ denote the induced measure. For a source $j$, define regime-indexed outgoing-edge measure as the conditional distribution of its outgoing edge vector,
\begin{align*}
\mu_j^r(h) \ &:=\ \P^r\!\left(A_{j,>j} \mid \cH_j^r=h\right), \\ A_{j,>j}&:=(A_{j,j+1},\ldots,A_{j,N}),
\end{align*}
where $\cH_j^r$ collects the variables available to supervise outgoing edge generation from $j$ in regime $r$, including at least $\beta_j^r$. When the only supervisor being conditioned on is the source context, we write $\mu_j^r(b):=\P^r(A_{j,>j}\mid \beta_j^r=b)$.
\end{definition}

The ordered generative process means the causal graph $A$ is not fixed, but co-evolves with the context $\beta$ via $\P(A|\beta)$.

\begin{definition}[Intervention-Induced Structural Change; IISC]
\label{def:iisc}
IISC occurs for an intervention regime $\mathrm{int}$ if there exists a source $j$ such that $\mu_j^{\mathrm{int}}\neq\mu_j^0$ on the relevant conditioning support.
\end{definition}

$\mu_j^r$ factorizes when outgoing edges are conditionally independent Bernoullis given $\cH_j^r$. Although \emph{dependence across outgoing edges} yields non-product $\mu_j^r$, our IISC definitions apply to both.

\begin{remark}
Extending POSCMs beyond ordered generation is an important direction. One option is
\emph{time-unrolling} (dynamic POSCMs) in which feedback is represented via lagged variables,
yielding an acyclic graph on a finite horizon. Another option is \emph{cyclic/equilibrium}
semantics via fixed-point SCMs or $\sigma$-separation
\citep{bongers2021foundations, forre2018constraint}, which would require revisiting the
intervention semantics and IISC diagnostics in a cyclic setting.
\end{remark}

\section{Identifiability Theory}
\label{sec:identifiability}

POSCMs introduce multiple interacting mechanisms, so identifiability must be indexed by the available intervention primitives. We formalize this and establish both negative and positive results.

\begin{definition}[$(\mathcal{I},\Ocal)$-equivalence and identifiability]
\label{def:I-equiv}
Fix a measurement model $\Ocal$ (Def.~\ref{def:measurement-model}) and ordering $\tau$.
An \emph{intervention family} $\mathcal{I}$ is a collection of interventions from the primitive types in Sec.~\ref{sec:interventions} (node, edge-message), possibly restricted to a class (e.g., $\{\doop(\beta_j=b):b\in\mathcal{B}_j\}$).
Two POSCMs $M,M'$ are \emph{$(\mathcal{I},\Ocal)$-equivalent} if, for every intervention $\iota\in\mathcal{I}$, they induce identical distributions over the \emph{same} observables $\tilde Z^\iota := \Ocal(A^\iota,\beta^\iota,V^\iota)$ generated by the fixed measurement model $\Ocal$.
A parameter $\theta$ is \emph{$\mathcal{I}$-identifiable (under $\Ocal$)} if $(\mathcal{I},\Ocal)$-equivalence implies $\theta(M)=\theta(M')$, up to stated equivalence transformations.
\end{definition}

\paragraph{No ``observer swapping.''}
Equivalence is defined relative to a fixed $\Ocal$: we do \emph{not} allow changing the observation process to restore equivalence between mismatched causal models.
The following equivalence transformations are unavoidable when contexts are latent and interventions are not anchored to an intrinsic coordinate system:
(i) \emph{context reparameterization} (diffeomorphisms $\gamma:\mathcal{B}\to\mathcal{B}$ with appropriately transformed kernels and context mechanisms), and
(ii) \emph{discrete label permutation} when $\beta_i$ takes values in a finite set.
In addition, if message coordinates are obtained via a non-unique functional representation (e.g., KAS), then identification of \emph{message coordinates} is understood either relative to a fixed representation or ``up to'' the induced representation gauge (Remark~\ref{rem:kas-gauge}).

\paragraph{Intervention-family notation.}
We use $\mathcal{I}_{\mathrm{obs}}$ for the baseline observational regime, $\mathcal{I}_{\beta\text{-node}}$ for admissible interventions $\{\doop(\beta_j=b)\}$, $\mathcal{I}_{V\text{-node}}$ for admissible interventions $\{\doop(V_j=v)\}$, $\mathcal{I}_{\beta\text{-edge}}$ for admissible context-message interventions $\{\doop_\beta(s\to t;\widetilde H^\beta_{t\leftarrow s})\}$, and $\mathcal{I}_{V\text{-edge}}$ for admissible value-message interventions $\{\doop_V(j\to i;\widetilde H^V_{i\leftarrow j})\}$. When joint interventions are required, this is stated explicitly in the corresponding assumption.
\begin{proposition}[Non-identifiability without $\beta$-interventions]
\label{prop:nonid}
Let $\mathcal{I} = \mathcal{I}_{\mathrm{obs}} \cup \mathcal{I}_{V\text{-node}} \cup \mathcal{I}_{V\text{-edge}}$ (no $\beta$-level interventions). If $\beta$ is unobserved, then:
\begin{enumerate}[label=(\roman*),nosep,leftmargin=*]
    \item The structure kernel $\P(A\mid \beta)$ (equivalently, the baseline supervising measures $\mu_j^0(\beta_j) := \P^0(A_{j,>j}\mid \beta_j)$) is not $\mathcal{I}$-identifiable beyond context reparameterization.
    \item The context propagation mechanisms $\{\phi_i\}$ (equivalently, the conditional distributions $\P(\beta_i \mid \beta_{\Pa_A(i)},\Pa_A(i))$) are not $\mathcal{I}$-identifiable beyond context reparameterization.
    \item The mechanism-assignment operator $\Gamma$ (equivalently, the conditional distributions $\P(f_i \in \Fcal \mid \beta_i,\Pa_A(i))$, and hence the induced value kernels $\P(V_i \mid V_{\Pa_A(i)},\beta_i,\Pa_A(i))$) is not $\mathcal{I}$-identifiable beyond context reparameterization.
\end{enumerate}
\end{proposition}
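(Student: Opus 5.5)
We argue by explicit construction: for each of (i)--(iii) we exhibit two POSCMs $M, M'$ that are $(\mathcal{I},\Ocal)$-equivalent for $\mathcal{I}=\mathcal{I}_{\mathrm{obs}}\cup\mathcal{I}_{V\text{-node}}\cup\mathcal{I}_{V\text{-edge}}$ but whose targets differ, and whose difference is not removable by any context reparameterization $\gamma$ or label permutation. The guiding observation is that every intervention in $\mathcal{I}$ acts only in Phase II or on value messages. The joint law of $(A,\{f_i\})$ is therefore never shifted; only the downstream propagation to $V$ changes. Whenever $\beta$ is unobserved, the observable law under any $\iota\in\mathcal{I}$ is a functional of the marginal law of $(A,f)$, or of $(A,f,H^V)$ in the message-augmented case, pushed through a fixed $\Ocal$. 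Any two models that agree on this marginal are thus $(\mathcal{I},\Ocal)$-equivalent. This reduces the claim to a mixture non-identifiability statement: distinct latent-context kernels can induce the same marginal over $(A,f)$.

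First, we make this reduction precise. Since $\Ocal$ is fixed and $\betatilde$ is not part of $\tilde Z$, we will check that for each $\iota\in\mathcal{I}$ the law of $(A^\iota,V^\iota)$ depends on $M$ only through $\P(A,f)$ and the message primitives $H^V$, which we keep identical in $M$ and $M'$. This is a short induction along $\tau$ using Definitions~\ref{def:node-interventions} and~\ref{def:edge-message-interventions}. Second, we build the counterexamples on a minimal system: $N=2$, $\beta_j\in\{0,1\}$ or $[0,1]$, and $\Ocal$ the identity on $(A,V)$.

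For (i), take $M$ with $\beta_1\sim\mathrm{Bern}(1/2)$ and $\P(A_{12}=1\mid\beta_1)=0.2+0.6\beta_1$. Take $M'$ with $\beta_1$ constant and $\P(A_{12}=1)=1/2$. Both models share a $\Gamma$ that ignores $\beta$, so $\P(A,f)$ agrees. We then check that no reparameterization maps one to the other: a bijection $\gamma$ preserves the number of support points and the multiset of conditional edge probabilities, and here the nondegenerate kernel cannot be carried onto the degenerate one. For (ii), take $N=2$ with $A_{12}\equiv1$. In $M$, set $\beta_2=\beta_1$; in $M'$, set $\beta_2$ independent of $\beta_1$. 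Since $\alpha$ and $\Gamma$ do not use $\beta_2$, the observables coincide. The two models differ in the dependence structure of $(\beta_1,\beta_2)$, which is invariant under coordinatewise reparameterization. For (iii), take $\Gamma(f_2\mid\beta_2)$ selecting $f_2=g_{\beta_2}$ with $g_0\neq g_1$ and $\beta_2\sim\mathrm{Bern}(1/2)$ in $M$. In $M'$, take $\beta_2\sim\mathrm{Bern}(1/3)$ with the same two mechanisms, with $g_0$ chosen with probability $1/2$ under either context in a compensating way; more simply, take $\beta_2$ constant and $\Gamma'$ a $1/2$--$1/2$ mixture. The law of $f_2$, and hence of $V$ under every $\doop(V_1=v)$ and every $V$-edge intervention, is unchanged. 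The conditional value kernel $\P(V_2\mid V_1,\beta_2)$, however, differs in a way no relabeling fixes.

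The main obstacle is the ``beyond context reparameterization'' clause. We must verify that the constructed pairs are not related by any admissible $\gamma$, which requires choosing an invariant that reparameterizations preserve, such as the number of atoms of the context law, or mutual information and independence structure, and showing that it differs between $M$ and $M'$. A secondary subtlety is that $V$-edge interventions must not leak information about $\beta$. We handle this by keeping the message maps $H^V$ independent of $\beta$ in both models, as permitted by Definition~\ref{def:msg-poscm}.
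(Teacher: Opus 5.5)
Your argument is correct, but it proves a different and stronger, though only existential, form of the statement than the paper does.

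\textbf{The paper's route.} The paper reads ``not identifiable beyond context reparameterization'' as ``the reparameterization gauge is always present.'' It fixes an \emph{arbitrary} POSCM $\mathcal{M}$ and a diffeomorphism $\gamma$, and builds $\mathcal{M}'$ by setting $\beta'_i=\gamma(\beta_i)$, pushing forward the root law, and composing $\alpha$, $\phi_i$, $\Gamma$ with $\gamma^{-1}$. It then notes that the law of $(A,V)$ under every Phase-II intervention is unchanged, and that $\Ocal^A,\Ocal^V$ never see $\beta$.

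\textbf{Your route.} You read the clause as ``not identifiable even modulo reparameterization.'' You reduce all observables to the joint law of $(A,f)$, with $H^V$ and the value-noise law held fixed. You then exhibit mixture-type pairs: a nondegenerate latent context versus a constant one with the same marginal of $(A,f)$. These pairs are equivalent yet separated by reparameterization invariants, such as the number of context atoms or the dependence between $\beta_1$ and $\beta_2$.

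\textbf{What each buys.} The paper's route is universal: every model carries its whole $\gamma$-orbit in its equivalence class. That is why reparameterization and label permutation are listed as unavoidable equivalences, and it is what Experiment~1's type swap instantiates. Its weakness is that it only exhibits ambiguity up to the gauge. It also tacitly needs the kernels not to be $\gamma$-invariant to get $\theta(\mathcal{M})\neq\theta(\mathcal{M}')$, which it does not check.

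Your route shows the failure is worse than a gauge freedom: latent mixtures are not resolved at all. This matches the paper's remark that $\mu^0_j$ underdetermines $\mu^{\beta\text{-do}}_j$. Your reduction also handles $V$-edge interventions explicitly, which the paper's proof only asserts ``by construction.'' Non-identifiability modulo reparameterization implies non-identifiability modulo any finer equivalence, so your argument settles the stated claim. It does not, however, give the ``for every model'' symmetry.

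\textbf{Details to tidy.}
\begin{itemize}
\item In (ii), take $\beta_1$ nondegenerate.
\item In (iii), choose $g_0,g_1$ with distinct response laws; otherwise only $\Gamma$ differs, not the value kernels.
\item In (iii), index $f_2$ by the parent set, and drop the muddled $\mathrm{Bern}(1/3)$ variant.
\item Definition~\ref{def:msg-poscm} gives $F_i$ no $\beta_i$ argument. State explicitly that the context-dependent choice $g_{\beta_2}$ is carried by the realized aggregator (or by the law of $U^V_2$ given $\beta_2$). Also state that $H^V_{2\leftarrow 1}$ is common to both $g$'s, so $V$-edge replacements act identically in $M$ and $M'$.
\end{itemize}
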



Full proof in Appendix~\ref{app:proof:prop:nonid}.

Proposition~\ref{prop:nonid} establishes that $V$-level data alone cannot disentangle the structure kernel, context propagation, or mechanism assignment when contexts are latent. Thus the barrier in Proposition~\ref{prop:nonid} is that the observational outgoing-edge measure $\mu_j^0$ underdetermines the interventional measure $\mu_j^{\beta\text{-do}}$; context interventions close this gap on the reachable support in Theorem~\ref{thm:complete-id}. A second barrier arises when edges are also latent.

\begin{lemma}[Structure-mechanism confounding under latent edges]
\label{lem:struct_mech_confounding}
When edges are latent (not observed via $\mathcal{O}$), node-level interventions $\mathrm{do}(V_j = \tilde{v})$ are insufficient to determine the distribution resulting from an $V$-edge intervention, even with degenerate (non-random) contexts.
\end{lemma}


Full proof in Appendix~\ref{app:proof:lem:struct_mech_confounding}.

Together, Proposition~\ref{prop:nonid} and Lemma~\ref{lem:struct_mech_confounding} identify two distinct barriers to identification: (1) latent contexts create reparameterization symmetries, and (2) latent edges create structure-mechanism confounding. Positive identification requires addressing both.

\paragraph{Assumptions for the positive result.}
We assume access to (i) probing-based structure readout for latent adjacency, (ii) context and value readout, (iii) joint $\beta$-node and $V$-node intervention coverage, and (iv) positivity.
Formal statements are in Appendix~\ref{app:kernel-assumptions}.

\begin{theorem}[Kernel identifiability]
\label{thm:complete-id}
Let $\mathcal{I}_{\mathrm{kern}} := \mathcal{I}_{\mathrm{obs}} \cup \mathcal{I}_{\beta\text{-node}} \cup \mathcal{I}_{V\text{-node}}$ and fix the measurement model $\Ocal$ and ordering $\tau$ (Def.~\ref{def:I-equiv}). Assume formal assumptions in Appendix~\ref{ass:struct-readout}--\ref{ass:positivity}. Then, on the intervention-reachable support, the POSCM kernels $\alpha$, $\{\phi_i\}$, and $\Gamma$ are $(\mathcal{I}_{\mathrm{kern}},\Ocal)$-identifiable in the following kernel sense:
\begin{enumerate}[label=(\roman*),nosep,leftmargin=*,labelsep=0.3em, labelwidth=0pt, align=left]
    \item \textbf{Structure kernel $\alpha$.} For each node $j$, the interventional supervising measure
    \[
        \mu_j^{\beta\text{-do}}(b) := \P\!\bigl(A_{j,>j}\mid \doop(\beta_j=b)\bigr)
    \]
    is identifiable. In particular, each dyadwise marginal $b\mapsto \P(A_{ji}=1\mid \doop(\beta_j=b))$ for $i>j$ is identifiable.
    \item \textbf{Context propagation kernels induced by $\{\phi_i\}$.} For each node $i$ and each realized parent set $S$ in the support of $\Pa_A(i)$ under $\mathcal{I}_{\mathrm{kern}}$, the conditional context kernel
    \[
        K^{\beta}_{i,S}(\cdot\mid b_S)
        \;:=\;
        \P\!\bigl(\beta_i \mid \doop(\beta_S=b_S),\ \Pa_A(i)=S\bigr)
    \]
    is identifiable as a function of $b_S$ on the intervention-reachable support. Equivalently, the conditional distribution $\P(\beta_i\mid \beta_{\Pa_A(i)},\Pa_A(i))$ is identifiable (up to the equivalences discussed above when interventions are not anchored).
    \item \textbf{Endogenous-variable mechanism kernels induced by $\Gamma$.} For each node $i$ and each realized parent set $S$ in the support of $\Pa_A(i)$ under $\mathcal{I}_{\mathrm{kern}}$, the conditional value kernel
    \begin{align*}
        &K^{V}_{i,S}(\cdot\mid v_S,b_i)
        :=\\
        &\P\!\bigl(V_i \mid \doop(V_S=v_S),\ \beta_i=b_i,\ \Pa_A(i)=S\bigr)
    \end{align*}
    is identifiable as a function of $(v_S,b_i)$ on the intervention-reachable support. Equivalently, the conditional distribution $\P(V_i\mid V_{\Pa_A(i)},\beta_i,\Pa_A(i))$ is identifiable.
\end{enumerate}
Identification is purely distributional and uses repeated sampling under interventions; adjacency is accessed through the probing-based readout in Assumption~\ref{ass:struct-readout}.
\end{theorem}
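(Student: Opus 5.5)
The plan is to express each target kernel as a functional of the interventional distribution of the observables $\tilde Z^\iota$ under some $\iota\in\mathcal{I}_{\mathrm{kern}}$. Since $(\mathcal{I}_{\mathrm{kern}},\Ocal)$-equivalent models share all these distributions, they then share the kernels.

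The common first step is deconvolution of the fixed, known measurement model $\Ocal$. By the context and value readout assumption, the laws of $\beta^\iota$ and $V^\iota$ (jointly with the latent adjacency) are recovered from the law of $\tilde Z^\iota$. The probing-based structure readout of Assumption~\ref{ass:struct-readout} does the same for $A^\iota$, or at least for the events $\{\Pa_A(i)=S\}$ and $\{A_{j,>j}=a\}$. After this step we may work with the law of $(A^\iota,\beta^\iota,V^\iota)$ on the reachable support.

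\textbf{(i) Structure kernel.} Take $\iota=\doop(\beta_j=b)$. In the ordered generation of Def.~\ref{def:poscm}, $\beta_j$ is set before any outgoing edge $A_{ji}$, $i>j$, is drawn. Each such edge is drawn as $\alpha(\beta_j,U^A_{ji})$ with exogenous noise independent of everything generated before. Hence $\P^\iota(A_{j,>j})$ is exactly $\mu_j^{\beta\text{-do}}(b)$, and it is read off from the recovered law of $A^\iota$. The dyadwise marginals then follow by marginalization. Positivity ensures every $b$ in the reachable support is covered.

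\textbf{(ii) Context propagation.} For the joint intervention $\doop(\beta_S=b_S)$, we condition on the readout event $\{\Pa_A(i)=S\}$, which has positive probability by Assumption~\ref{ass:positivity}. Given $\Pa_A(i)=S$, the value $\beta_i$ equals $\phi_i(b_S,U^\beta_i)$. The key point is that $U^\beta_i$ is independent of the conditioning event. That event is determined by $\beta_{<i}$ and the $U^A_{\cdot i}$, all of which are generated before $\beta_i$ from noises independent of $U^\beta_i$. So the conditional law equals $K^\beta_{i,S}(\cdot\mid b_S)$, which is the law of $\phi_i(b_S,U^\beta_i)$. The observational conditional $\P(\beta_i\mid\beta_{\Pa_A(i)},\Pa_A(i))$ coincides with it, because the contexts of realized parents enter $\beta_i$ only through $\phi_i$.

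\textbf{(iii) Value kernels.} Use the joint $\beta$/$V$ coverage, e.g.\ the baseline or $\doop(\beta_i=b_i)$ combined with $\doop(V_S=v_S)$, and condition on $\beta_i=b_i$ and $\Pa_A(i)=S$. Because the $V$-intervention is Phase II, it leaves $(A,\beta,f)$ unchanged. Then $V_i=f_i(v_S,U^V_i)$ with $f_i\sim\Gamma(b_i,S,U^f_i)$, and $(U^f_i,U^V_i)$ is independent of $(A,\beta)$ and of the values of other nodes. So the conditional law is the $\Gamma$-mixture $\int f(v_S,\cdot)\,d\Gamma$, which is $K^V_{i,S}$.

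Two points need care here. Conditioning on $\beta_i=b_i$ has measure zero for continuous contexts. I would handle this through regular conditional distributions (disintegration), or use $\doop(\beta_i=b_i)$ directly if coverage allows. In addition, setting $V_S$ removes every back-door from the other parents, since $\Pa_A(i)=S$ means no other $V_k$ enters $f_i$.

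\textbf{Main obstacle.} The hardest step is the conditioning-independence argument in (ii) and (iii). Conditioning on the endogenous event $\{\Pa_A(i)=S\}$ could in principle select on the noises that also drive $\beta_i$ or $f_i$, a collider-type bias. I would first write the augmented DAG of Fig.~\ref{fig:poscm_hierarchy}. I would then show that $U^\beta_i$ and $(U^f_i,U^V_i)$ are d-separated from $A_{\cdot i}$ given the intervened variables, using mutual independence of the exogenous noises. The reduction to "up to reparameterization" for unanchored interventions would be stated as a remark rather than proved.
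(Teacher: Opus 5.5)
Your proposal is correct and follows essentially the same route as the paper's proof. Like the paper, you use the readout assumptions to reason as if $(A,\beta,V)$ were observed. You then read off $\mu_j^{\beta\text{-do}}(b)$ under $\doop(\beta_j=b)$, condition on the identified event $\{\Pa_A(i)=S\}$ under the joint $\doop(\beta_S=b_S)$, and combine the Phase-II clamp $\doop(V_S=v_S)$ with conditioning on $(\beta_i,\Pa_A(i))$. Your explicit noise-independence argument and disintegration remark fill in details the paper leaves implicit; they are what justify the ``equivalently'' observational/structural readings, since $K^\beta_{i,S}$ and $K^V_{i,S}$ are themselves defined as interventional conditionals. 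One small correction: in (i), covering every reachable $b$ comes from Assumption~\ref{ass:coverage}, not from positivity.
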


Full proof in Appendix~\ref{app:proof:thm:complete-id}
\paragraph{Estimator implied by Theorem~\ref{thm:complete-id}.}
The theorem suggests the following experimental pipeline: 1) fix or estimate an ordering $\tau$; 2) calibrate context and value readouts; 3) use repeated Phase-II probes to recover $A$ or a posterior over parent sets; 4) estimate $\mu_j^{\beta\text{-do}}(b)=\P(A_{j,>j}\mid\doop(\beta_j=b))$ under $\beta$-node interventions; 5) stratify by identified parent set and context to estimate $K_i^\beta$ and $K_i^V$ under joint $\beta$- and $V$-node interventions. The result is therefore an experimental-design statement, not a claim that these kernels are recoverable from passive $V$-data alone.

Appendix Table~\ref{tab:assumption-relaxations} summarizes how the sufficient assumptions relax, and Appendix~\ref{app:kernel-assumptions} gives the finite-strata sample-scaling guide for plug-in kernel estimation on reachable support. Appendix~\ref{app:intervention-hierarchy} contains further discussion of intervention minimality and dyadic message identifiability (including Theorem~\ref{thm:msg-id} and Proposition~\ref{prop:instance-id}).

\section{Experiments}
\label{sec:experiments}

We validate our identifiability results (Proposition~\ref{prop:nonid}, Lemma~\ref{lem:struct_mech_confounding}, Theorem~\ref{thm:complete-id}) in a biophysically detailed virtual human retina simulator (NEURON; ModelDB \#2018247; \citep{LY2025144}). The retina is a natural POSCM: cell type co-determines synaptic connectivity and synaptic transfer functions, and the laminar circuit supports a biologically grounded ordered-generation semantics. Tables~\ref{tab:retina-dictionary-main} and~\ref{tab:retina-theory-main} give the POSCM--retina dictionary and the theory--experiment correspondence used throughout this section.

\begin{table}[t]
\centering
\begin{tabular}{p{0.30\linewidth}p{0.60\linewidth}}
\toprule
POSCM object & Retina instantiation\\
\midrule
$\tau$ & Layer order PR $\to$ HZ $\to$ BC $\to$ AC $\to$ RGC. \\
$\beta_i$ & Cell type, supervises both wiring and synaptic physiology. \\
$A_{ji}$, $\alpha$ & Synapse indicator and type- and distance-dependent connectivity kernel. \\
$\Gamma/f_i$ & Synaptic rule and parameters, e.g., sign, gain, threshold, slope, and time constant. \\
$V_i$ & Membrane-potential activity over the simulated epoch. \\
\bottomrule
\end{tabular}
\caption{POSCM--retina dictionary used in the virtual retina experiments}
\label{tab:retina-dictionary-main}
\end{table}

\begin{table}[t]
\centering
\begin{tabular}{p{0.30\linewidth}p{0.60\linewidth}}
\toprule
Theory target & Simulation role \\
\midrule
Prop.~\ref{prop:nonid} / Exp.~1 & Non-identifiability without $\beta$ access: type (i.e., mechanism) swaps remain indistinguishable under the restricted $V$-level interface. \\
Lemma~\ref{lem:struct_mech_confounding} / Exp.~2 & Latent-edge structure--mechanism confounding: calibrated $(p,g)$ pairs match under node clamps but diverge under edge conductance tests. \\
Thm.~\ref{thm:complete-id} / Exp.~3 & Kernel recovery with readout and interventions: context sweeps modulate composition and voltage sweeps recover a pathway transfer curve. \\
\bottomrule
\end{tabular}
\caption{Theory--experiment correspondence. Exp.~1 targets Proposition~\ref{prop:nonid}, Exp.~2 targets Lemma~\ref{lem:struct_mech_confounding}, and Exp.~3 targets Theorem~\ref{thm:complete-id}.}
\label{tab:retina-theory-main}
\end{table}

Crucially, real retinal experiments exhibit precisely the partial observability that motivates POSCMs. Electrophysiological recordings (e.g., patch clamp or extracellular arrays) reveal neural activity but not cell type \citep{MEISTER199495}; conversely, anatomical reconstructions such as serial-section electron microscopy recover wiring and morphological identity but not activity \citep{Briggman2011}. Most standard preparations offer no perturbation capability \citep{MEISTER199495}. When perturbations are available, they span multiple tiers of our intervention hierarchy: axon transection or laser ablation severs individual connections ($V$-edge interventions) \citep{10.3389/fncel.2019.00423}, genetic inactivation under cell-type-specific promoters silences entire cell classes ($\beta$-node interventions) \citep{Montgomery2010}, and activity-dependent manipulations during development can rewire connectivity itself (structure-kernel interventions) \citep{Shen2020}. A virtual retina simulator lets us access all of these intervention types with full ground-truth knowledge, enabling controlled tests of each identifiability prediction.

A small retinal patch contains a functionally complete microcircuit; we simulate independent random patches (``seeds'') as independent POSCM instances. Chemical synapses implement graded release often modeled with an approximately $\tanh$ activation:

\begin{equation}
s_\infty = \tanh\!\left(\frac{V_{\mathrm{pre}} - V_{\mathrm{thr}}}{V_{\mathrm{slope}}}\right), \
I_{\mathrm{syn}} = g_{\max}\, s \,(V_{\mathrm{post}} - e_{\mathrm{rev}}),
\label{eq:retina-synapse}
\end{equation}

with first-order binding kinetics $\dot{s} = (s_\infty - s)/[\tau(1-s_\infty)]$. Typical parameters are $V_{\mathrm{thr}} \approx -45~\text{mV}$, $V_{\mathrm{slope}} = 10~\text{mV}$, $\tau = 10~\text{ms}$, and $g_{\max}=0.00256~\mu\text{mho}$.

\subsection{Experiment 1: Non-identifiability without $\beta$-interventions}
\label{sec:retina-exp1}

Proposition~\ref{prop:nonid} predicts that if cell types $\beta$ are unobserved and no $\beta$-level interventions are available, then $(\alpha,\{\phi_i\},\Gamma)$ are not identifiable beyond context reparameterization. We instantiate this by constructing a \emph{type-swapped} twin retina model that is observationally indistinguishable even under $V$-node interventions.

\paragraph{Setup (twin models).}
Model $M$ uses standard parameters. ON bipolar cells (ON-BC) receive sign-inverting mGluR synapses with conductance $g_{\mathrm{ON}}=+0.00256~\mu\text{mho}$ and threshold $V_{\mathrm{thr}}^{\mathrm{ON}}=-40$\,mV; OFF-BCs receive sign-preserving iGluR synapses with $g_{\mathrm{OFF}}=-0.00256~\mu\text{mho}$ and $V_{\mathrm{thr}}^{\mathrm{OFF}}=-42$\,mV. Model $M'$ swaps ON-BC and OFF-BC labels (and swaps the corresponding synaptic parameters). In both models, $\beta$ is \emph{not recorded}.

\paragraph{Protocol.}
For each of $B=2$ seeds, we run 10 worlds: observational ($M$ and $M'$), plus $V$-node interventions that voltage-clamp a single photoreceptor ($\mathrm{PR}_{\text{upper}}[0]$) at $v \in \{-60,-50,-40,-30\}$\,mV in each model (NEURON \texttt{SEClamp}, series resistance $0.001~\Omega$).

\paragraph{Analysis.}
From each cell trace we extract a firing rate via threshold crossings at $-20$\,mV and compare the across-cell firing-rate distributions between $M$ and $M'$ with a two-sample Kolmogorov--Smirnov test:
$H_0: F_M = F_{M'}$, $D = \sup_r \left|F_M(r)-F_{M'}(r)\right|.$

\begin{table}[t]
\centering
\begin{tabular}{lccccc}
\toprule
& Obs. & $-60$ & $-50$ & $-40$ & $-30$ \\
\midrule
KS $D$ & 0.0225 & 0.0294 & 0.0294 & 0.0277 & 0.0294 \\
$p$-value & 0.999 & 0.964 & 0.964 & 0.980 & 0.964 \\
\bottomrule
\end{tabular}
\caption{Experiment~1: KS test comparing firing-rate distributions between $M$ and the type-swapped twin $M'$. Columns $-60$ to $-30$ denote $\doop(V_{\mathrm{PR}}=v)$ in mV. The high $p$-values ($\gg 0.05$) are consistent with $(\mathcal{I},\Ocal)$-equivalence
under the restricted $V$-level interface, as predicted by Proposition~\ref{prop:nonid}.}
\label{tab:retina-exp1-ks}
\end{table}

\paragraph{Results.} Table~\ref{tab:retina-exp1-ks} shows that swapping ON/OFF type labels together with the corresponding synaptic rules preserves the induced $V$-distributions when types are latent, empirically validating the reparameterization symmetry underlying Proposition~\ref{prop:nonid}. This type swap is not itself a POSCM intervention $\doop(\beta=b)$; it constructs an alternative POSCM $M'$ with a different context and mechanism labeling but the same observable measure under the available $V$-level interface. Thus voltage-clamp data without $\beta$ access cannot break the type/mechanism symmetry.

\subsection{Experiment 2: Structure--mechanism confounding under latent edges}
\label{sec:retina-exp2}

Lemma~\ref{lem:struct_mech_confounding} shows that when edges are latent, $V$-node interventions do not suffice to predict $V$-edge intervention effects. We construct two retina models with different PR$\to$BC connectivity densities whose $V$-responses match under node-level clamps but diverge under edge-level conductance perturbations.

\paragraph{Setup (calibrated confounding pair).}
Model $M$ uses the natural PR$\to$BC synapse density $p$ and baseline conductance $g=0.00256~\mu\text{mho}$. Model $M'$ randomly blocks 40\% of PR$\to$BC synapses (sets $g_{\max}=0$), yielding $p' = 0.6p$, and scales the remaining conductances to $g' = g/0.6 = 0.00427~\mu\text{mho}$ so that $p\,g = p'\,g'$ (matching mean synaptic drive in the linear regime).

\paragraph{Protocol.}
For each of $B=2$ seeds we run (i) \emph{node-do} conditions that clamp a single PR at $v\in\{-60,-50,-40,-30\}$\,mV, and (ii) \emph{edge-do} conditions that replace the conductance on all active PR$\to$BC synapses with a test value $g_{\text{test}} \in \{0.001,0.002,0.004,0.008\}~\mu\text{mho}$ (preserving blocked synapses in $M'$).

\paragraph{Analysis.}
For each cell we compute a steady-state effect $\Delta V_i = \bar V_i^{\text{(int)}} - \bar V_i^{\text{(obs)}}$, where $\bar V_i$ is the mean potential over the last 50\% of the trace. We focus on the postsynaptic BC population (BIP$_{\text{upper}}$) and compare effect distributions between $M$ and $M'$ using maximum mean discrepancy (MMD) with an RBF kernel: $\mathrm{MMD}^2(X,Y) = \mathbb{E}[k(X,X')] - 2\,\mathbb{E}[k(X,Y)] + \mathbb{E}[k(Y,Y')]$, $k(x,y)=\exp\!\left(-\frac{\|x-y\|^2}{2\sigma^2}\right)$.

\begin{table}[t]
\centering
\begin{tabular}{lclc}
\toprule
\multicolumn{2}{c}{Node-do} & \multicolumn{2}{c}{Edge-do} \\
Condition & MMD & Condition & MMD \\
\midrule
 $v=-60$\,mV & 0.0440 & $g_{\text{test}}=0.001$ & 0.1135 \\
 $v=-50$\,mV & 0.0435 & $g_{\text{test}}=0.002$ & 0.0950 \\
 $v=-40$\,mV & 0.0465 & $g_{\text{test}}=0.004$ & 0.1853 \\
 $v=-30$\,mV & 0.0437 & $g_{\text{test}}=0.008$ & 0.1309 \\
 Mean & \textbf{0.0444} & Mean & \textbf{0.1312} \\
\bottomrule
\end{tabular}
\caption{Experiment~2: MMD for the calibrated pair $(M,M')$. Node-do effects remain similar, while edge-do tests reveal the latent structural difference.}
\label{tab:retina-exp2-mmd}
\end{table}

\paragraph{Results.} Table~\ref{tab:retina-exp2-mmd} shows that node-level clamps keep the calibrated pair hard to distinguish, whereas edge-level interventions expose the differing synapse counts, consistent with Lemma~\ref{lem:struct_mech_confounding}. The calibration $p\,g=p'\,g'$ is exact only for a linearized synapse; under the nonlinear $\tanh$ dynamics in Eq.~\eqref{eq:retina-synapse}, the node-do MMD remains small but nonzero. The edge-do mean MMD ($0.1312$) is about $3\times$ the node-do mean MMD ($0.0444$); finite sample detectability can be assessed with a permutation or bootstrap null.

\subsection{Experiment 3: Kernel identifiability via $\beta$-node and $V$-node interventions}
\label{sec:retina-exp3}

Theorem~\ref{thm:complete-id} states that with $\beta$-node and $V$-node interventions (plus readout/coverage assumptions) the POSCM kernels become identifiable. We demonstrate two corresponding identification routes in the retina: a context sweep (proxy $\beta$-node intervention) that changes cell-type composition, and a voltage sweep ($V$-node interventions) that recovers a pathway-level transfer curve.

\paragraph{Protocol.} During phase I (context sweep),
we vary the eccentricity parameter (a global context proxy) over $\{-1.2,-1.5,-2.0,-2.5,-3.0,-3.5\}$\,mm before network construction, producing different type compositions and densities. During phase II (voltage sweep), we simultaneously clamp all bipolar cells (BIP$_{\text{upper}}$) to $v \in \{-70,-60,-50,-40,-30,-20\}$\,mV (bulk \texttt{SEClamp}) and measure the induced effect on ganglion cells.

\begin{table}[t]
\centering
\begin{tabular}{lcccccc}
\toprule
Ecc. & $-1.2$ & $-1.5$ & $-2.0$ & $-2.5$ & $-3.0$ & $-3.5$ \\
\midrule
Cells & 289 & 306 & 330 & 347 & 350 & 346 \\
\bottomrule
\end{tabular}
\caption{Experiment~3 (Phase I): eccentricity sweep changes network composition; $-1.2$ mm is baseline.}
\label{tab:retina-exp3-ecc}
\end{table}

\paragraph{Results} Table~\ref{tab:retina-exp3-ecc} shows that changing eccentricity changes network composition, providing context-level variation. For the voltage sweep,
we compute $\Delta \bar V_{\mathrm{RGC}} = \bar V_{\mathrm{RGC}}^{\doop(V_{\mathrm{BC}}=v)} - \bar V_{\mathrm{RGC}}^{\text{(obs)}}$ (Fig.~\ref{fig:retina-transfer-curve}; Appendix Table~\ref{tab:retina-exp3-transfer}). The sweep traces a sigmoidal BC$\to$RGC pathway response; thhe affine-$\tanh$ guide has midpoint near $-37$\,mV and width about $5.3$\,mV, so we interpret it as pathway-level recovery rather than direct recovery of single-synapse parameters. Together, the context and voltage sweeps illustrate the readout- and intervention-driven kernel recovery in Theorem~\ref{thm:complete-id}.

\begin{figure}[t]
\centering
\includegraphics[width=\linewidth]{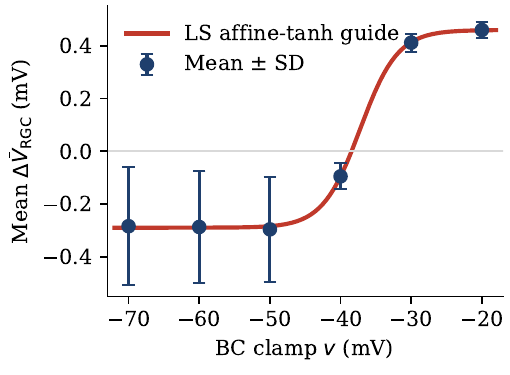}
\caption{BC$\to$RGC voltage sweep. Points show the mean $\Delta\bar V_{\mathrm{RGC}}\pm$ SD (Appendix Table~\ref{tab:retina-exp3-transfer}); The affine-$\tanh$ curve is a pathway-level visual guide, not direct recovery of the single-synapse nonlinearity in Eq.~\eqref{eq:retina-synapse}.}
\label{fig:retina-transfer-curve}
\end{figure}

\paragraph{Gene-regulatory analogue.}
As a second-domain check, we repeated the experiments in SERGIO, an external gene-regulatory network (GRN) simulator \citep{dibaeinia2020sergio} (Appendix~\ref{app:grn-analogue}).

\section{Discussion}
POSCMs extend SCMs to settings where interaction structure is generated by upstream context and can itself be intervened on, while retaining acyclic ordered-generation semantics. The resulting intervention vocabulary spans context, value nodes and edge-local channels, and the identifiability results diagnose both positive conditions and failure modes: latent-context symmetries and latent-edge structure–mechanism confounding. The retina experiments and SERGIO-backed GRN analogue (Appendix~\ref{app:grn-analogue}) confirm these analytical results in external simulators under the proposed readout and coverage assumptions. Future work should weaken access assumptions, handle cyclic or time-unrolled settings, and separate dyadic message identification questions from kernel identification ones.



\begin{acknowledgements} 
    Turan Orujlu is funded by the Deutsche Forschungsgemeinschaft (DFG, German Research Foundation) under Germany’s Excellence Strategy – EXC number 2064/1 – Project number 390727645. Charley M. Wu is supported by the Deutsche
    Forschungsgemeinschaft (German Research Foundation, DFG) under
    Germany’s Excellence Strategy (EXC 3066/1 ``The Adaptive Mind'',
    Project No. 533717223), and the Excellence Cluster ``Reasonable AI'' by the Deutsche Forschungsgemeinschaft (German Research Foundation, DFG) under Germany’s Excellence Strategy – EXC-3057. The authors thank the International Max Planck Research School for Intelligent Systems (IMPRS-IS) for supporting Turan Orujlu. The authors also thank Eric J. Tchetgen Tchetgen, Ilya Shpitser, Sacha Sokoloski, Trung Phung, Helen Guo,  Jacob Chen, and Beatrix Wen for insightful discussions.
\end{acknowledgements}

\bibliography{uai2026-template}

\onecolumn

\title{Partially Observed Structural Causal Models\\(Supplementary Material)}
\maketitle

\appendix
\section{Related Work}
\label{sec:related_work}

\paragraph{Causal discovery over fixed graphs.}
Classic causal discovery methods, constraint-based (e.g., Peter-Clark, fast causal inference) and score-based (e.g., greedy equivalence search), typically aim to recover a \emph{fixed} causal graph from samples, with extensions handling latent confounding, interventions, and partial observability \citep{spirtes2000causation,chickering2002optimal,silva2006latent}.
These methods assume the causal graph is a fixed object to be recovered. This precludes modeling settings where the interaction structure is itself a stochastic outcome of upstream processes where interventions may alter not just the mechanisms on a given graph, but the measure that generates the graph. POSCMs address this limitation by treating structure formation as an endogenous causal mechanism subject to its own intervention semantics (Sec.~\ref{sec:interventions}).

\paragraph{Context-dependent mechanisms.}
A central idea in modern causal inference is that mechanisms may shift across environments, and that such shifts aid identification. Invariance-based frameworks exploit this: invariant causal prediction \citep{Peters2016} identifies causal parents as those whose conditional holds invariant across environments, joint causal inference \citep{mooij2020joint} pools observational and interventional datasets by modeling regime explicitly, and transportability theory \citep{pearl2014external} characterizes when causal queries transfer across domains. A related line of work makes the regime indicator a first-class variable in the graphical model: \citet{dawid2020decision}'s decision-theoretic framework introduces a regime node to unify observational and interventional measures, while the data-fusion literature uses selection variables to flag which node-level mechanisms differ between settings \citep{bareinboim2013meta, Perry2022, lee2024fusion}. Separately, context-specific independence models and their graphical generalizations (staged trees, chain event graphs, context-specific trees) represent conditional independencies that hold only in certain regimes \citep{Boutilier1996,collazo2020ceg,Riccomagno2007,Duarte2024}. Across all these frameworks, context or regime modulates \emph{node-level} mechanisms, i.e., the full conditional $P(X_i|\Pa_i)$. They do not decompose mechanisms into edge-specific components that may vary independently with context. This matters in practice: for instance, genetically encoded tools can now ablate individual synapses in neural circuits \citep{10.7554/eLife.103757}, an intervention that targets a single edge mechanism rather than an entire node. POSCMs extend the context-dependence picture to this finer granularity: contexts are upstream regime variables that jointly supervise both the node-level value mechanisms and the edge-formation measure, so that a shift in context can alter not only how a variable responds to its parents, but which parents it has and how each parent's influence is transmitted.

\paragraph{LDAGs and context-specific causal inference.}
Labelled DAGs (LDAGs) encode context-specific independencies by attaching labels to edges: in a context where a label is active, the corresponding local parent contribution is inactive \citep{tikka2019context,mokhtarian2022context}. POSCMs recover this deterministic observed-context case by taking $A_{ji}=g_{ji}(\beta)$. If $g_{ji}(b)=0$, the $j\to i$ channel contribution is zero and the local kernel for $V_i$ is invariant to $V_j$ given the active parent set. This is a local mechanism restriction, not a claim of marginal independence. POSCMs extend the LDAG picture to stochastic edge formation, latent or noisy contexts, and interventions on the graph-generation measure itself.

\paragraph{Latent contexts and partial observability.}
Because contexts can be latent, POSCMs connect causal discovery with latent variables and measurement error \citep{carroll2006measurement,schennach2016measurement}, and causal representation learning, which seeks latent factors that render mechanisms stable across environments \citep{scholkopf2021crl}.
Additionally, since POSCMs can model interacting entities whose latent contexts stochastically generate the interaction structure itself, they also connect to relational causal models \citep{maier2013relational} which formalize causal reasoning over repeated interacting units. However, the aforementioned models do not treat edge formation as an interventionable causal mechanism.

\paragraph{Functional decomposition and edge-level interventions.}
To intervene on individual edges, one needs a decomposition of node mechanisms into per-parent contributions. A long line of work exploits functional assumptions (e.g., additive noise) to identify causal directionality and decompose mechanisms into interpretable components \citep{buhlmann2014cam}. Existing edge-level intervention formalisms take a different approach: the hierarchy of \citet{shpitser2016identification} defines edge interventions as value-routing, i.e., sending different values along different pathways within a fixed mechanism, while soft interventions \citep{Correa2020, JaberKSB20} modify mechanisms but typically at the node level. Our message-augmented representation (Sec.~\ref{sec:message-aug}) plays an analogous role to additive-noise decompositions: it separates a context-to-structure kernel from dyad-local value mechanisms, providing the functional primitives on which true edge-mechanism interventions (Sec.~\ref{sec:interventions}) are defined.

\section{KAS edge-functional decomposition beyond continuous compact mechanisms}
\label{app:kas_density}

KAS-style decompositions are classically stated for \emph{continuous} mechanisms on \emph{compact} domains (e.g., $[0,1]^n$) \citep{kolmogorov1957,arnold1957,sprecher1996}.
In contrast, SCMs (and hence POSCMs) typically allow arbitrary measurable mechanisms on non-compact domains (e.g., $\R^n$), and may be discontinuous.
For probabilistic modeling, it is natural to measure approximation error in $L^p(\mu)$ under the parent distribution $\mu$.
The following standard truncation-approximation argument shows that KAS edge-functional decomposition remains dense in this sense.

\begin{lemma}[An $L^p(\mu)$ density statement for KAS edge-functional decompositions]
\label{lem:kas_density_lp}
Let $X\sim \mu$ be an $\mathbb{R}^n$-valued random vector and let $f:\mathbb{R}^n\to \mathbb{R}$
be a measurable function with $\mathbb{E}\!\left[|f(X)|^p\right]<\infty$ for some $1\le p<\infty$.
For any $\varepsilon>0$, there exists a measurable function $g:\mathbb{R}^n\to \mathbb{R}$ such that

\begin{equation*}
    \mathbb{E}\!\left[\,|f(X)-g(X)|^p\,\right] < \varepsilon,    
\end{equation*}

and such that $g$ is representable in KAS edge-functional decomposition form (on a sufficiently large compact cube),
in the following sense: there exists $R<\infty$ for which $g|_{[-R,R]^n}$ admits a KAS representation after affine rescaling of $[-R,R]^n$ to $[0,1]^n$.
\end{lemma}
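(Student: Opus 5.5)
The argument is a standard truncation-plus-approximation scheme in three steps, followed by an appeal to the classical Kolmogorov--Arnold--Sprecher theorem \citep{kolmogorov1957,arnold1957,sprecher1996}.

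\textbf{Step 1 (truncate the domain).} Since $\mathbb{E}|f(X)|^p<\infty$, dominated convergence gives $\mathbb{E}\bigl[|f(X)|^p\mathbf{1}\{X\notin[-R,R]^n\}\bigr]\to 0$ as $R\to\infty$. We fix $R$ so that this tail term is below $\varepsilon/3^{p}$, up to a constant we will absorb later.

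\textbf{Step 2 (approximate on the cube by a continuous function).} Let $\mu_R$ be the restriction of $\mu$ to $K=[-R,R]^n$. This is a finite Borel measure on a compact metric space, so it is Radon, and $C(K)$ is dense in $L^p(\mu_R)$ by Lusin's theorem or the standard density of continuous functions in $L^p$ of a finite Radon measure. We choose $h\in C(K)$ with $\int_K|f-h|^p\,d\mu<\varepsilon/3^p$.

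\textbf{Step 3 (define $g$ and apply KAS).} Set $g=h$ on $K$ and $g=0$ off $K$; this $g$ is measurable. By Minkowski's inequality, or simply by splitting the integral over $K$ and $K^c$,
\begin{equation*}
\mathbb{E}|f(X)-g(X)|^p=\int_K|f-h|^p\,d\mu+\int_{K^c}|f|^p\,d\mu<\varepsilon,
\end{equation*}
with the constants chosen appropriately. Let $T:[0,1]^n\to K$, $T(u)=R(2u-1)$, be the affine rescaling. Then $h\circ T$ is continuous on $[0,1]^n$, so KAS gives
\begin{equation*}
h\circ T(u)=\sum_{q=0}^{2n}\chi_q\Bigl(\sum_{p=1}^n\lambda_p\psi(u_p+qa)\Bigr),
\end{equation*}
which is a representation in the edge-functional form, with inner dyadic maps $\lambda_p\psi(\cdot+qa)$ and an outer aggregation. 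Consequently $g|_K=(h\circ T)\circ T^{-1}$ admits a KAS representation after affine rescaling, as claimed.

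\textbf{Main obstacle.} The only delicate point is Step 2 when $\mu$ is an arbitrary Borel probability measure. Any such measure on $\mathbb{R}^n$ is regular, and the density of $C(K)$ in $L^p(\mu_R)$ only requires finiteness and regularity, so $\mu$ need not be absolutely continuous. Note also that $g$ is typically discontinuous on $\partial K$; this does not matter, because the claimed representation concerns only $g|_K$.
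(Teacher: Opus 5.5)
Your proof is correct and follows the same architecture as the paper's: truncate the domain to $[-R,R]^n$, approximate by a continuous function on the cube, extend by zero, and apply KAS after affine rescaling. The only differences are that you cite the density of $C(K)$ in $L^p(\mu_R)$ for a finite Radon measure, whereas the paper proves this step inline via range clipping, Lusin, and Tietze, and that you combine the two errors by splitting the integral over $K$ and $K^c$ (exact, since $g=0$ off $K$) rather than using Minkowski's inequality.
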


\begin{proof}
Write $\|\cdot\|_{p}$ for the $L^p(\mu)$ norm:
\begin{equation*}
    \|h\|_{p} := \Big(\mathbb{E}\big[|h(X)|^p\big]\Big)^{1/p}.    
\end{equation*}

Fix $\varepsilon>0$ and write $\varepsilon_p := \varepsilon^{1/p}$.
It suffices to construct $g$ with $\|f-g\|_{p}<\varepsilon_p$, since then
$\mathbb{E}[|f(X)-g(X)|^p]=\|f-g\|_{p}^p<\varepsilon$.
We split this $L^p(\mu)$ error budget evenly and set $\delta := \varepsilon_p/3$.

\paragraph{Step 1 (domain truncation).}
Let $C_R := [-R,R]^n$ and define $f_R(x):= f(x)\mathbf{1}\{x\in C_R\}$. Then
\begin{equation*}
    \|f-f_R\|_{p}^p
= \mathbb{E}\big[\,|f(X)|^p\,\mathbf{1}\{\|X\|_\infty>R\}\big].
\end{equation*}
As $R\to\infty$, the indicator $\mathbf{1}\{\|X\|_\infty>R\}$ decreases pointwise to $0$, and the integrand is dominated
by $|f(X)|^p\in L^1$. By the dominated convergence theorem \cite{lebesgue1910},
$\mathbb{E}[\,|f(X)|^p\mathbf{1}\{\|X\|_\infty>R\}]\to 0$.
Hence we may choose $R$ such that
\begin{equation*}
    \|f-f_R\|_{p} < \delta.
\end{equation*}

\paragraph{Step 2 (range truncation / clipping).}
For $B>0$, define the clipping operator
\begin{equation*}
    \operatorname{clip}_B(y) := \operatorname{sign}(y)\min\{|y|,B\}.    
\end{equation*}
This map is $1$-Lipschitz on $\mathbb{R}$: for all $a,b\in\mathbb{R}$,
\begin{equation*}
    \bigl|\operatorname{clip}_B(a)-\operatorname{clip}_B(b)\bigr| \le |a-b|.
\end{equation*}
In particular, if $|b|\le B$ (so $\operatorname{clip}_B(b)=b$), then
\begin{equation*}
|a-\operatorname{clip}_B(a)|
\le |a-b| + \bigl|\operatorname{clip}_B(a)-\operatorname{clip}_B(b)\bigr|
\le 2|a-b|.
\end{equation*}

Let $\tilde f(x) := \operatorname{clip}_B(f_R(x))$. Then $\tilde f$ is measurable, supported on $C_R$, and $|\tilde f|\le B$.
Moreover, for each $x$,
\begin{equation*}
    |f_R(x)-\tilde f(x)|
    = \bigl(|f_R(x)|-B\bigr)_+
    = \bigl(|f_R(x)|-B\bigr)\mathbf{1}\{|f_R(x)|>B\},
\qquad\text{so}\qquad
|f_R(x)-\tilde f(x)|^p \le |f_R(x)|^p\,\mathbf{1}\{|f_R(x)|>B\}.
\end{equation*}

Since $|f_R(X)|^p \le |f(X)|^p$ and $|f(X)|^p$ is integrable, the dominated convergence theorem \citep{lebesgue1910} gives
$\mathbb{E}[|f_R(X)|^p\mathbf{1}\{|f_R(X)|>B\}]\to 0$ as $B\to\infty$.
Hence we may choose $B$ such that
\begin{equation*}
    \|f_R-\tilde f\|_{p} < \delta.
\end{equation*}

\paragraph{Step 3 (Lusin + Tietze: continuous approximation on $C_R$).}
Set $\eta := (\delta/2B)^p$. By Lusin's theorem \citep{lusin1912}, there exists a compact $K\subseteq C_R$ such that $\tilde f|_K$ is continuous and
$\mu(C_R\setminus K)<\eta$.
By the Tietze extension theorem (since $C_R$ is normal) \citep{tietze1915}, there exists a continuous function
$h:C_R\to\mathbb{R}$ such that $h=\tilde f$ on $K$ and $\sup_{x\in C_R}|h(x)|\le B$.
Since $h=\tilde f$ on $K$ and both are bounded by $B$ on $C_R$, for $x\in C_R\setminus K$ we have
$|\tilde f(x)-h(x)|\le 2B$. Therefore,
\begin{equation*}
    \|\tilde f-h\|_{p}^p
= \mathbb{E}\!\left[|\tilde f(X)-h(X)|^p\,\mathbf{1}\{X\in C_R\}\right]
\le (2B)^p\,\mu(C_R\setminus K)
< (2B)^p\,\eta
= \delta^p,
\end{equation*}
hence
\begin{equation*}
    \|\tilde f-h\|_{p} < \delta.
\end{equation*}

\paragraph{Step 4 (KAS representation on the compact cube).}
Define the affine homeomorphism $T_R:C_R\to [0,1]^n$ by
\begin{equation*}
    T_R(x):=\frac{x+R\mathbf{1}}{2R}\quad\text{(componentwise)},\qquad
T_R^{-1}(u):=2Ru-R\mathbf{1},    
\end{equation*}

where $\mathbf{1}$ is the all-ones vector in $\mathbb{R}^n$.
Let $\bar h(u):= h(T_R^{-1}(u))$, which is continuous on $[0,1]^n$.

By the Kolmogorov-Arnold-Sprecher representation theorem for continuous functions on $[0,1]^n$ \citep{kolmogorov1957,arnold1957,sprecher1996}, there exist
real constants $\eta,\lambda_1,\dots,\lambda_n$, a continuous function $\Phi:\mathbb{R}\to\mathbb{R}$,
and a continuous increasing function $\varphi:[0,1]\to[0,1]$ such that for all $u\in[0,1]^n$,
\begin{equation*}
    \bar h(u)
=
\sum_{q=0}^{2n}
\Phi\!\left(
\sum_{p=1}^{n} \lambda_p\,\varphi(u_p+\eta q) + q
\right).
\end{equation*}
Define $k(u)$ to be the right-hand side, and set
\begin{equation*}
    g_R(x) := k(T_R(x))\quad\text{for }x\in C_R.
\end{equation*}
Then \emph{exactly} $g_R(x)=h(x)$ for all $x\in C_R$.
Finally, define a measurable $g:\mathbb{R}^n\to\mathbb{R}$ by
\begin{equation*}
    g(x):=
\begin{cases}
g_R(x), & x\in C_R,\\
0, & x\notin C_R.
\end{cases}
\end{equation*}

\paragraph{Clarification (global domain).}
Classical KAS is stated on compact domains; we therefore interpret ``$g$ is KAS-representable'' as
``$g|_{C_R}$ is KAS-representable after rescaling''. The arbitrary definition of $g$ outside $C_R$ is immaterial
for the $L^p(\mu)$ approximation since $\mu(C_R^c)$ can be made arbitrarily small by Step~1.

\paragraph{Step 5 (combine errors with Minkowski).}
On $\mathbb{R}^n$ we have
\begin{equation*}
    f-g = (f-f_R) + (f_R-\tilde f) + (\tilde f-h) + (h-g).
\end{equation*}
Moreover, $h-g=0$ $\mu$-a.e. because $h=g$ on $C_R$ and both vanish outside $C_R$ under our extensions.
Hence, by Minkowski's inequality (valid for $p\ge 1$) \citep{minkowski1910},
\begin{equation*}
    \|f-g\|_{p}
\le
\|f-f_R\|_{p} + \|f_R-\tilde f\|_{p} + \|\tilde f-h\|_{p}
< \delta+\delta+\delta
=
\varepsilon^{1/p}.
\end{equation*}

Therefore $\mathbb{E}[|f(X)-g(X)|^p]=\|f-g\|_{p}^p<\varepsilon$, as desired.

\medskip
Lemma~\ref{lem:kas_density_lp} is not an identifiability result; it simply supports our use of KAS edge-functional decomposition
as an existence theorem for well-posed surgical edge interventions under a distributional notion of approximation.
\end{proof}

\section{Intervention hierarchy extensions}
\label{app:intervention-hierarchy}

\subsection{Necessary vs.\ sufficient intervention types across the hierarchy}
\label{sec:necessity-vs-sufficiency}

\paragraph{Which intervention tiers are (and are not) needed for identifying $(\alpha,\{\phi_i\},\Gamma)$ when $A$ is unobserved.}
Theorem~\ref{thm:complete-id} shows that to identify the \emph{population} kernels $(\alpha,\{\phi_i\},\Gamma)$ in the sense of conditional distributions, it suffices to use \emph{node} interventions at the context and value levels,
$\mathcal{I}_{\beta\text{-node}}\cup\mathcal{I}_{V\text{-node}}$; edge-message interventions are not required for this kernel-level target.
At the same time, ``necessity'' is relative to the chosen $(\mathcal{I},\Ocal)$ (Def.~\ref{def:I-equiv}); our results establish two concrete barriers and one additional readout requirement:
\begin{enumerate}[label=(\roman*),nosep,leftmargin=*]
    \item If contexts are unobserved and \emph{no} $\beta$-level interventions are available, then $(\alpha,\{\phi_i\},\Gamma)$ are not identifiable beyond context reparameterization (Proposition~\ref{prop:nonid}); $V$-node/\mbox{$V$-edge} interventions in Phase~II cannot break a symmetry that lives in Phase~I.
    \item If adjacency is unobserved, some \emph{structure readout} resource is required to condition on events such as $\{\Pa_A(i)=S\}$; in this paper it is provided by repeated Phase~II probing using $V$-node interventions (Assumption~\ref{ass:struct-readout}), with a sufficient condition stated in Proposition~\ref{prop:instance-id}(i).
    \item Edge-message interventions become essential only when the identification target is \emph{finer} than kernels; for instance, to predict the effects of edge interventions themselves under latent edges (Lemma~\ref{lem:struct_mech_confounding}), or to identify dyadic message primitives in a message-augmented representation (Theorem~\ref{thm:msg-id}).
\end{enumerate}

\begin{table}[t]
\centering
\begin{tabular}{lcccc}
\toprule
Identification target (latent $A$) & $\beta$-node & $V$-node & $\beta$-edge & $V$-edge \\
\midrule
Kernels $(\alpha,\{\phi_i\},\Gamma)$ (Thm.~\ref{thm:complete-id}) & sufficient & sufficient$^\dagger$ & not needed & not needed \\
Value dyadic primitives $\{H^V_{i\leftarrow j}\}$ and slot-response kernels of $F_i$ (Thm.~\ref{thm:msg-id}) & not needed & used$^\dagger$ & not needed & used \\
Context dyadic primitives $\{H^\beta_{i\leftarrow j}\}$ and slot-response kernels of $\Phi_i$ (Thm.~\ref{thm:msg-id}) & used & used$^\dagger$ & used & not needed \\
Realized adjacency $A^\star$ (Prop.~\ref{prop:instance-id}(i)) & not needed & used & not needed & not needed \\
\bottomrule
\end{tabular}
\caption{Summary of which primitive intervention types are used by different identification targets in this paper when adjacency is not directly observed. ``Sufficient/used'' refers to sufficient intervention families under the stated assumptions; ``not needed'' means the primitive does not appear in that result. $^\dagger$ $V$-node interventions enter through the probing-based structure readout protocol (Assumption~\ref{ass:struct-readout}). Necessity in the strict sense is relative to $(\mathcal{I},\Ocal)$; we give explicit impossibility results for missing $\beta$-level access (Prop.~\ref{prop:nonid}) and for predicting edge-intervention effects under latent edges (Lemma~\ref{lem:struct_mech_confounding}).}
\label{tab:intervention-summary}
\end{table}

\paragraph{An intermediate regime: $\beta$-edge interventions without $\beta$-node interventions.}
Proposition~\ref{prop:nonid} rules out identification when $\beta$ is unobserved and no \emph{$\beta$-level} interventions are available.
A natural intermediate experimental regime, relevant in settings where one can perturb a specific communication channel (synapse, interaction) but cannot directly ``set'' a unit's latent type, is to allow $\beta$-edge interventions while disallowing $\beta$-node interventions.
Because an edge intervention replaces an internal message map with an \emph{externally specified} function $\widetilde H$, it need not respect the reparameterization symmetry used in Proposition~\ref{prop:nonid}.
In particular, constant clamps $\widetilde H(\cdot)\equiv m$ erase dependence on the latent context and therefore do not, by themselves, anchor a continuous context coordinate system.
In contrast, non-constant replacements can in principle break the symmetry: under a reparameterization $\gamma$ of the latent context space, the post-intervention message becomes $\widetilde H(\beta_s)$ in one model and $\widetilde H(\gamma(\beta_s))$ in a reparameterized model, which generically induces different interventional measures unless $\widetilde H$ is invariant to $\gamma$.
This suggests that sufficiently rich $\beta$-edge interventions may partially substitute for $\beta$-node interventions for kernel identification at non-root nodes, whereas root contexts remain intrinsically harder to probe in Phase~I because they have no incoming $\beta$-channels.
We leave a formal minimality characterization of intervention families in such restricted regimes to future work.

\begin{assumption}[Edge-intervention richness and channel distinguishability]
\label{ass:edge-richness}
The family $\mathcal{I}$ contains a sufficiently rich class of edge-message replacements (Def.~\ref{def:edge-message-interventions}) to probe a single dyadic channel while holding other inputs fixed. In particular, for each dyad $j\to i$ the intervention family can implement constant ``message clamps'' $\widetilde H(\cdot)\equiv m$ for $m$ in a set with non-empty interior in the message space.

Moreover, fixing all other parent values by $V$-node interventions (Assumption~\ref{ass:parentval}) and conditioning on the event $\{A_{ji}=1\}$, the map
\[
m \;\longmapsto\; \P\!\bigl(V_i \mid \doop_{V}(j\to i;\widetilde H\equiv m),\ \doop(V_{\Pa_A(i)}=v),\ \beta_i=b,\ \Pa_A(i)=S,\ A_{ji}=1\bigr)
\]
is injective on that reachable set (non-degenerate message channel).

\paragraph{Replay variant.} For the replay-based Route~(C) in Theorem~\ref{thm:msg-id}, we additionally assume a pointwise (noise-fixed) version of channel distinguishability: for any fixed $(v,b,S)$ as above (and fixed values of the other message slots induced by $(v,S)$), the map
\[
 m \;\longmapsto\; F_i(\ldots,m,\ldots,u)
\]
 is injective on the reachable set for $\P(U^V_i\mid \beta_i=b,\Pa_A(i)=S)$-almost every $u$.
\end{assumption}

\begin{assumption}[Paired-world replay (idealized)]
\label{ass:paired}
In a paired multi-world experiment, multiple interventions may be applied to the \emph{same} underlying exogenous draw (i.e., the same latent noise realization), so that the only difference between two ``worlds'' is the intervention itself. This assumption is \emph{not} required for Theorem~\ref{thm:complete-id}; it is only used as an \emph{idealized} route for identifying fine-grained dyadic message primitives in Theorem~\ref{thm:msg-id}(C).
\end{assumption}

Theorem~\ref{thm:complete-id} is the \emph{kernel-level} identification result: it establishes identifiability of the causal conditionals $\alpha$, $\P(\beta_i\mid \beta_{\Pa_A(i)},\Pa_A(i))$, and $\P(V_i\mid V_{\Pa_A(i)},\beta_i,\Pa_A(i))$ from ordinary (independently sampled) interventional distributions.  Under Assumption~\ref{ass:struct-readout}, the adjacency is \emph{never} directly observed but is recovered via a probing-based readout procedure (Remark~\ref{rem:probe-readout}), so that events such as $\{\Pa_A(i)=S\}$ may be conditioned on after this recovery.

\paragraph{From kernels to dyadic message mechanisms.}
Theorem~\ref{thm:complete-id} identifies node-level \emph{kernel} (conditional distributions) but does not identify a particular dyadic decomposition of those kernels (Remark~\ref{rem:kas-gauge}).
The next theorem shows that, under stronger experimental access, the dyadic message primitives used to define edge interventions become identifiable (up to gauge).
This refinement is not required for kernel identification, but it clarifies when edge-mechanism questions are learnable from data.

\begin{theorem}[Dyadic message identifiability (three routes)]
\label{thm:msg-id}
Assume the POSCM admits a message-augmented representation (Def.~\ref{def:msg-poscm}). Assume structure readout (Assumption~\ref{ass:struct-readout}), \emph{context readout} (Assumption~\ref{ass:context-readout}), value readout (Assumption~\ref{ass:value-readout}), positivity (Assumption~\ref{ass:positivity}), and edge-intervention richness (Assumption~\ref{ass:edge-richness}). Then the \emph{value} message primitives $\{H^{V}_{i\leftarrow j}\}_{j<i}$ are identifiable on the intervention-reachable support, up to representation gauge (Remark~\ref{rem:kas-gauge}). Moreover, under Routes~(A)--(B) below, for each target node $i$ and dyad $j\to i$, the induced $j$-slot \emph{response kernel}
\[
m \longmapsto \P\!\bigl(V_i \mid \doop_{V}(j\to i;\widetilde H\equiv m),\ \doop(V_{\Pa_A(i)}=v),\ \beta_i=b,\ \Pa_A(i)=S,\ A_{ji}=1\bigr)
\]
is identifiable on the reachable support. (In general this identifies only these pushforward kernels, not the deterministic map $F_i$ separately from the latent noise $U^V_i$; cf.\ Proposition~\ref{prop:instance-id}(iii).)
The message-primitive identification holds under any one of the following routes:
\begin{enumerate}[label=(\Alph*),nosep,leftmargin=*]
    \item \textbf{Joint node controls (no replay).} If value-intervention coverage holds (Assumption~\ref{ass:parentval}), then for each target node $i$ and dyad $j\to i$, combining (i) joint $V$-node interventions that fix all parent values $V_{\Pa_A(i)}$ and (ii) a family of \emph{single-edge} $V$-edge message clamps on $j\to i$ with varying constants $m$ isolates the $j$th dyadic channel across independent experimental units, identifying $H^{V}_{i\leftarrow j}$ and the associated $j$-slot response kernel on the reachable support.
    \item \textbf{Message clamping (no replay).} Under constant clamps $\widetilde H(\cdot)\equiv m$, comparing the baseline distribution at $\doop(V_j=v)$ (holding other parent values fixed) to the family of clamped distributions recovers the unique message value $m=H^{V}_{i\leftarrow j}(v)$ that matches the baseline (uniqueness by Assumption~\ref{ass:edge-richness}), thereby identifying $H^{V}_{i\leftarrow j}(\cdot)$ on the reachable support.
    \item \textbf{Paired-world replay (idealized).} Under paired-world replay (Assumption~\ref{ass:paired}) together with the pointwise distinguishability condition in Assumption~\ref{ass:edge-richness}, replay blocks allow within-unit calibration across clamp values while holding all other inputs and the full exogenous draw fixed, identifying $H^{V}_{i\leftarrow j}$ on the reachable support \emph{without} joint parent-value controls (provided $V_j$ varies across replay blocks over that support).
\end{enumerate}
Analogous statements hold for the \emph{context} message primitives $\{H^{\beta}_{i\leftarrow j}\}$ and the induced slot-response kernels of $\{\Phi_i\}$, replacing $V$-operations with $\beta$-operations (and using context readout to access the relevant conditioning events).
\end{theorem}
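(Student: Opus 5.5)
The strategy is to reduce everything to conditional interventional distributions that are observable once the conditioning events are accessible. The first step is the same as in Theorem~\ref{thm:complete-id}: structure readout (Assumption~\ref{ass:struct-readout}), context readout and value readout make the events $\{\Pa_A(i)=S,\ A_{ji}=1,\ \beta_i=b\}$ measurable functions of the observables. Positivity (Assumption~\ref{ass:positivity}) ensures these events have positive probability on the reachable support. For each fixed $(i,j,S,b,v)$ we then work inside this stratum. By Definition~\ref{def:msg-poscm}, $V_i=F_i(\{A_{ki}H^V_{i\leftarrow k}(V_k)\}_{k<i},U^V_i)$. Applying $\doop(V_{\Pa_A(i)}=v)$ therefore fixes every message slot $k\neq j$ at $H^V_{i\leftarrow k}(v_k)$ and slot $j$ at $H^V_{i\leftarrow j}(v_j)$. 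Slots with $k\notin S$ are zero. The noise $U^V_i$ keeps its conditional law given $(\beta_i,\Pa_A(i))$, because $V$-node interventions leave Phase~I untouched (Definition~\ref{def:node-interventions}).

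\textbf{Slot-response kernel (Routes A/B).} Under the additional clamp $\doop_V(j\to i;\widetilde H\equiv m)$, slot $j$ is set to $m$ and all other slots stay as above. The resulting law of $V_i$, read through the known $\Ocal^V$, is a directly estimable conditional distribution. This gives the $j$-slot response kernel $R_{v_{-j}}(m)$ on the set of clamp values, which has non-empty interior by Assumption~\ref{ass:edge-richness}.

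\textbf{Message primitive.} Next, the unclamped distribution under $\doop(V_{\Pa_A(i)}=v)$ equals $R_{v_{-j}}(H^V_{i\leftarrow j}(v_j))$, since both have the same slot configuration and the same noise law. Injectivity of $m\mapsto R_{v_{-j}}(m)$ then forces a unique $m$ that matches the baseline, and this $m$ must be $H^V_{i\leftarrow j}(v_j)$. Letting $v_j$ range over the reachable support identifies $H^V_{i\leftarrow j}$ there. This is Route~(B); Route~(A) is the same argument organised across independent units, using Assumption~\ref{ass:parentval} to supply the joint parent controls.

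\textbf{Gauge.} Any model $M'$ that is $(\mathcal I,\Ocal)$-equivalent to $M$ has its own clamp family. The clamp constants are external coordinates in the message space, so identification holds relative to that fixed parameterisation, i.e.\ up to the gauge of Remark~\ref{rem:kas-gauge}. If the clamp coordinates themselves are transformed by a compensated bijection, the recovered $H$ transforms accordingly.

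\textbf{Route (C).} Under Assumption~\ref{ass:paired}, a replay block fixes the full exogenous draw, including $U^V_i$ and the other parents' values. Comparing the realized $V_i$ with the clamped replays $F_i(\ldots,m,\ldots,u)$ and using pointwise injectivity gives a unique $m$ equal to $H^V_{i\leftarrow j}(V_j)$ for almost every $u$. Varying $V_j$ across blocks over the support recovers the function, with no joint parent controls needed.

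\textbf{Context messages.} The $\beta$ statements repeat the argument with $\Phi_i$, $\beta$-node controls on $\beta_S$, $\beta$-edge clamps and context readout.

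\textbf{Main obstacle.} The hardest step is justifying that the clamped and unclamped worlds share the same conditional noise law within the stratum $\{\Pa_A(i)=S,\beta_i=b\}$. The conditioning is on Phase-I events, and the structure readout probes are themselves interventions. I would handle this by arguing that Phase-II interventions do not alter $(A,\beta,U^V)$ (Definitions~\ref{def:node-interventions} and~\ref{def:edge-message-interventions}). I would also require the readout to be a deterministic function of probe outcomes whose own noise is independent of $U^V_i$.
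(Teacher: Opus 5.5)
Your proposal is correct and follows essentially the same argument as the paper. You condition on the read-out stratum $\{\Pa_A(i)=S,\ A_{ji}=1,\ \beta_i=b\}$ and fix the parent values, so that the baseline law is the slot-response kernel evaluated at $m^\star=H^V_{i\leftarrow j}(v_j)$; you then use injectivity of the clamp family to pin down $m^\star$, and for Route~(C) you do the same matching pointwise within replay blocks. The differences are cosmetic: you present the matching step as Route~(B) and Route~(A) as its cross-unit organisation, whereas the paper does the reverse, and you spell out why the conditional noise law agrees between clamped and unclamped worlds, which the paper simply asserts.
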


\paragraph{Population kernels versus realized instances.}
Theorem~\ref{thm:complete-id} concerns identifiability of population-level kernels from interventional distributions across independently sampled units.
In applications one may also wish to recover the realized adjacency and mechanisms of a \emph{particular} stationary instance.
The following proposition clarifies this distinction and provides a sufficient-condition companion to Assumption~\ref{ass:struct-readout}.

\begin{proposition}[Identifying a realized draw $(A^\star,f^\star)$ from identified kernels]
\label{prop:instance-id}
Theorem~\ref{thm:complete-id} identifies \emph{population-level} kernels. A particular realized draw $A^\star$ and mechanism collection $f^\star$ (Remark~\ref{rem:scm}) is \emph{not} identifiable from population-level interventional distributions alone, since kernels do not reveal which random outcome occurred. However, if the protocol permits repeated probing of the \emph{same} underlying instance and the instance is stationary across probes, then $A^\star$ and (in some regimes) $f^\star$ become identifiable.
\begin{enumerate}[label=(\roman*),nosep,leftmargin=*]
    \item If (a) the instance is stationary across repeated probes with fresh Phase~II value noise and measurement noise resampled i.i.d.\ across probes (Assumption~\ref{ass:struct-readout}), (b) joint $V$-node interventions are available (Assumption~\ref{ass:parentval}), (c) value readout holds (Assumption~\ref{ass:value-readout}), and (d) each true edge is \emph{controlled edge-faithful} in the following sense: whenever $A^\star_{ji}=1$ there exist values $v\neq v'$ and an assignment $v_{-j}$ to $V_{-j}:=(V_k)_{k<i,\,k\neq j}$ such that
\[
\P\!\bigl(V_i \mid \doop(V_j=v,\ V_{-j}=v_{-j})\bigr)\neq
\P\!\bigl(V_i \mid \doop(V_j=v',\ V_{-j}=v_{-j})\bigr),
\]
then $A^\star$ is identifiable in the large-sample limit. Moreover, for any desired error level $\delta>0$, a finite number of probes per tested intervention setting suffices to recover $A^\star$ with probability at least $1-\delta$ under standard finite-sample two-sample distinguishability/testing conditions.
    \item If $\Gamma$ is \emph{deterministic} given $(\beta_i^\star,\Pa_{A^\star}(i))$ (i.e., $f_i=\Gamma(\beta_i,\Pa_A(i))$ with no mechanism-randomness), then once $(A^\star,\beta^\star)$ are identified we have $f_i^\star=\Gamma(\beta_i^\star,\Pa_{A^\star}(i))$.
    \item If $\Gamma$ is \emph{stochastic}, then repeated probes identify the \emph{unit-specific response kernels} $v\mapsto \P(V_i\mid \doop(V_{\Pa_{A^\star}(i)}=v),f_i^\star)$ on the probed domain. Moreover, if the mechanism is distinguishable from its probed response kernel in the sense that the map
\[
f_i \longmapsto \bigl\{\P(V_i\mid \doop(V_{\Pa_{A^\star}(i)}=v),f_i)\bigr\}_{v\in\mathcal{V}_{\mathrm{probe}}}
\]
is injective on the candidate class (modulo observational equivalence on $\mathcal{V}_{\mathrm{probe}}$), then $f_i^\star$ is identified up to that equivalence. Identifying the \emph{structural function} $f_i^\star(\cdot,\cdot)$ itself is strictly stronger and, in general, requires additional noise-model structure (e.g., noise-free mechanisms, known additive noise, monotone structural quantiles, or a parametric mechanism class).
\end{enumerate}
A full statement and proof sketch are given in Appendix~\ref{app:proof:prop:instance-id}.
\end{proposition}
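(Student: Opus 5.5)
The plan is to treat the three parts of Proposition~\ref{prop:instance-id} separately, all conditional on one fixed realized instance $(A^\star,\beta^\star,f^\star)$ that stays fixed across probes. For the negative preamble, I would give a two-instance construction. Take a POSCM whose structure kernel gives $A_{12}\sim\mathrm{Bernoulli}(1/2)$, and two realized draws $A^\star\neq A^{\star\prime}$ of that same model. Both draws generate the same population interventional measures, because those measures integrate over $\alpha$. So no functional of $\{\P^\iota\}_{\iota\in\mathcal{I}}$ can output the realized draw. This is the sense in which $A^\star$ is not identifiable from population kernels.

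For (i), I would condition on the instance throughout. Under stationarity, each probe under an intervention setting $\iota=\doop(V_j=v,V_{-j}=v_{-j})$ yields i.i.d.\ draws of $\Vtilde_i$. The reason is that only $U^V$ and $U^{\Ocal}$ are resampled, so their law is $\P(\Vtilde_i\mid\iota,A^\star,f^\star)$. By value readout (Assumption~\ref{ass:value-readout}) and the fixed, known $\Ocal$, this law determines $\P(V_i\mid\iota,A^\star,f^\star)$. The test statistic for dyad $j\to i$ is then whether this law varies with $v$ for some $v_{-j}$.

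Soundness follows from the message/SCM form. If $A^\star_{ji}=0$, then $V_i=f_i^\star(V_{\Pa_{A^\star}(i)},U^V_i)$ does not take $V_j$ as an input, and clamping all other potential parents at $v_{-j}$ fixes the remaining inputs. The law of $V_i$ is therefore invariant in $v$. Completeness is exactly condition (d). Hence $A^\star_{ji}=1$ if and only if the map $v\mapsto\P(V_i\mid\doop(V_j=v,V_{-j}=v_{-j}))$ is non-constant for some $v_{-j}$. Running this test over all $j<i$, for all $i$, recovers $A^\star$; joint clamps are available by Assumption~\ref{ass:parentval}.

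The finite-sample claim will be handled by fixing finitely many witness settings per dyad, namely the $(v,v',v_{-j})$ guaranteed by (d), and running a consistent two-sample test on each. A union bound over the at most $N^2$ dyads then gives error at most $\delta$ once each test's error is below $\delta/N^2$. This step is the main obstacle. Condition (d) only guarantees that witnesses exist, so either the probed grid must be assumed to contain them, or one must assume a separation margin. I would state this as the ``standard distinguishability'' hypothesis (a positive gap in total variation or MMD), which makes the per-test sample size finite.

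For (ii), the argument is immediate. If $\Gamma$ is a deterministic function of $(\beta_i,\Pa_A(i))$, then identifying $\beta^\star$ (via context readout) and $A^\star$ (via (i)) gives $f_i^\star=\Gamma(\beta_i^\star,\Pa_{A^\star}(i))$. Here $\Gamma$ itself is identified as a kernel by Theorem~\ref{thm:complete-id}(iii); in the deterministic case that kernel is a Dirac, so it gives the map on the reachable support.

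For (iii), the same stationarity argument as in (i) identifies the unit-specific kernels $v\mapsto\P(V_i\mid\doop(V_{\Pa_{A^\star}(i)}=v),f_i^\star)$ on $\mathcal{V}_{\mathrm{probe}}$. Injectivity modulo observational equivalence then pins down $f_i^\star$ by definition. To show that the structural function is strictly harder to recover, I would give a counterexample: $f(v,u)=v+u$ and $f'(v,u)=v-u$ with symmetric $U$ induce identical kernels but are different functions. This shows that extra noise structure is required.
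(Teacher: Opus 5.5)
Your proof of (i)--(iii) is essentially the paper's proof. It uses the same soundness/completeness test on $v\mapsto\P(V_i\mid\doop(V_j=v,V_{-j}=v_{-j}))$ over repeated stationary probes, the same one-line argument for (ii), and the same kernel-versus-structural-function distinction in (iii). The two-draw argument for the negative preamble, the explicit union bound, and the $v+u$ versus $v-u$ counterexample are useful additions that the paper only asserts. One bookkeeping fix: the union bound must run over every test performed, not just over dyads, since you may test several settings per dyad.

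One side claim in (ii) is wrong and should be dropped or weakened. You say that $\Gamma$ is identified as a kernel by Theorem~\ref{thm:complete-id}(iii), and that this kernel is a Dirac and so gives the map. That theorem does not identify $\P(f_i\mid\beta_i,\Pa_A(i))$, which is the object that is a Dirac when $\Gamma$ is deterministic. It identifies only the value kernel $K^V_{i,S}(\cdot\mid v_S,b_i)$, which is the law of $\Gamma(b_i,S)(v_S,U^V_i)$. That law is not a Dirac unless the mechanism is noise-free.

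Your own counterexample in (iii) shows the problem: two deterministic assignments $\Gamma(b,S)=v+u$ and $\Gamma'(b,S)=v-u$, with symmetric $U$, induce identical $K^V$. So the value kernel cannot pin down the map $\Gamma(b_i,S)$, even when $\Gamma$ is deterministic.

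The paper's proof of (ii) only asserts the equality $f_i^\star=\Gamma(\beta_i^\star,\Pa_{A^\star}(i))$ once $(A^\star,\beta^\star)$ are known, implicitly treating $\Gamma$ as given. To turn (ii) into identification of $f_i^\star$ from data, you must either assume $\Gamma$ is known, or settle for identification up to the observational equivalence $\sim_{\mathrm{probe}}$ of (iii).
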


The readout assumptions in Theorem~\ref{thm:complete-id} directly address the barriers identified in Proposition~\ref{prop:nonid} (context readout) and Lemma~\ref{lem:struct_mech_confounding} (structure readout).

\section{Identifiability Proofs}
\label{app:proofs}

\paragraph{Readout and coverage assumptions for Theorem~\ref{thm:complete-id}.}
\label{app:kernel-assumptions}

\begin{assumption}[Experimental structure readout (latent adjacency)]
\label{ass:struct-readout}
The adjacency $A$ is latent (not directly observed).
Instead, the protocol provides a structure readout mechanism \emph{via multiple intervention experiments}:
each experimental unit may be probed repeatedly in Phase~II while its realized $(A^\star,\beta^\star,f^\star)$ remains stationary across probes. Across probes, Phase~II exogenous randomness (value noise) and measurement noise are resampled i.i.d., so repeated probes yield i.i.d.\ draws conditional on $(A^\star,\beta^\star,f^\star)$. The experimenter may apply joint $V$-node interventions (Assumption~\ref{ass:parentval}) across probes.
These repeated probes suffice to identify the realized adjacency $A^\star$ in the large-sample limit (a sufficient condition is given in Proposition~\ref{prop:instance-id}(i)).
Whenever results below condition on events involving $A$ (e.g., $\{\Pa_A(i)=S\}$), this is understood in terms of the identified adjacency returned by the probing procedure.
\end{assumption}

\begin{remark}[Structure readout via repeated probing]
\label{rem:probe-readout}
Assumption~\ref{ass:struct-readout} rules out direct observation of $A$.
Instead, it posits that adjacency can be recovered from \emph{multiple intervention experiments} on a stationary instance.
A sufficient condition is that value-intervention coverage holds and each realized edge is edge-faithful in a \emph{controlled} sense (Proposition~\ref{prop:instance-id}(i)), in which case repeated Phase~II probes identify the realized adjacency $A^\star$ (Proposition~\ref{prop:instance-id}(i)).
Proposition~\ref{prop:instance-id}(i) is a sufficient-condition companion to Assumption~\ref{ass:struct-readout} and is proved independently of Theorem~\ref{thm:complete-id}.
Kernel identification in Theorem~\ref{thm:complete-id} still relies on sampling \emph{independent} instances under interventions; repeated probing of a single instance identifies only $A^\star$ and, when $\Gamma$ is stochastic, unit-specific response kernels (Proposition~\ref{prop:instance-id}(iii)).
\end{remark}

\begin{assumption}[Context readout]
\label{ass:context-readout}
Contexts are observed (possibly noisily) through $\betatilde=\Ocal^{\beta}(\beta,U^{\Ocal}_\beta)$ in a way that makes $\beta$ identifiable up to any explicitly stated equivalence (e.g., label permutation).
\end{assumption}

\begin{assumption}[Value readout]
\label{ass:value-readout}
Endogenous values are observed (possibly noisily) through $\Vtilde=\Ocal^{V}(V,U^{\Ocal}_V)$ in a way that makes $V$ identifiable up to any explicitly stated equivalence.
\end{assumption}

\begin{assumption}[Context-intervention coverage]
\label{ass:coverage}
For each node $j$, the family of $\beta$-node interventions can set $\beta_j$ to values in a set with non-empty interior (continuous case) or can visit all labels (discrete case). Moreover, the protocol supports \emph{joint} $\beta$-node interventions: for any subset $S\subseteq\{0,\ldots,N{-}1\}$ and any tuple $b_S$ in the intervention-reachable set, the intervention $\doop(\beta_S=b_S)$ is admissible.
\end{assumption}

\begin{assumption}[Value-intervention coverage]
\label{ass:parentval}
For each node $j$, the family of $V$-node interventions can set $V_j$ to values in a set with non-empty interior (continuous case) or over full support (discrete case). Moreover, the protocol supports \emph{joint} $V$-node interventions: for any subset $S\subseteq\{0,\ldots,N{-}1\}$ and any tuple $v_S$ in the intervention-reachable set, the intervention $\doop(V_S=v_S)$ is admissible. In particular, for each node $i$ and each realized parent set $S\subseteq\{j:j<i\}$, the family of $V$-node interventions can set $(V_j)_{j\in S}$ jointly to any tuple $v_S$ in a set with non-empty interior (continuous case) or over full support (discrete case). Interventions at the $\beta$ level and the $V$ level may be combined in the same experiment (Phase~I + Phase~II).
\end{assumption}

\begin{assumption}[Positivity]
\label{ass:positivity}
On the intervention-reachable support, for all dyads $(j,i)$ with $j<i$ and all reachable values $b$,
\[
0 < \P\!\bigl(A_{ji}=1 \mid \doop(\beta_j=b)\bigr) < 1.
\]
\end{assumption}

\paragraph{Scope and assumption boundary.}
The results in this section are \emph{conditional identifiability} statements: they show that,
assuming the experimental capabilities in Assumptions~\ref{ass:struct-readout}--\ref{ass:paired},
the corresponding kernels/messages/mechanisms are identifiable from the stated interventional
distributions.  In particular, the readout/coverage/injectivity conditions are not derived from the
bare POSCM generative semantics; they encode additional experimental structure that breaks the
symmetries highlighted in Proposition~\ref{prop:nonid} and Lemma~\ref{lem:struct_mech_confounding}.

\begin{table}[t]
\centering
\begin{tabular}{p{0.31\linewidth}p{0.61\linewidth}}
\toprule
Relaxed condition & Consequence \\
\midrule
Limited intervention coverage & Kernels are identified only on the intervention-reachable support. \\
No positivity in a stratum & Zero-mass parent/context strata remain unlearned. \\
Noisy context/value readout & Kernel estimates inherit decoder error; identification is up to the readout equivalence. \\
Missing $A$ readout/probing & Parent-set-specific kernels collapse into mixtures over latent parent sets. \\
No edge-message access & Kernel targets may remain identifiable, but dyadic message primitives and edge-intervention counterfactuals require additional channel access. \\
\bottomrule
\end{tabular}
\caption{How the sufficient assumptions in Theorem~\ref{thm:complete-id} relax. The result is local on the support reached by the available intervention family.}
\label{tab:assumption-relaxations}
\end{table}

\paragraph{Finite-strata sample scaling.}
For finite or smoothed discrete strata, the usual plug-in multinomial scaling gives an order-of-magnitude guide. If $n$ is the effective intervention budget, $\rho$ is the minimum probability of a reachable conditioning stratum, and $d_{\mathrm{eff}}$ is the number of free kernel parameters in that stratum, then one expects
\[
\mathbb{E}\!\left[\mathrm{KL}(K\|\widehat K)\right]
=
O\!\left(\frac{d_{\mathrm{eff}}}{n\rho}\right)
+
\varepsilon_{\mathrm{readout}}
+
\varepsilon_{\mathrm{probe}},
\]
up to constants and logarithmic factors. The final terms denote context/value readout error and structure-probing error. This is not a new identification theorem; it only summarizes the standard finite-strata cost of estimating the kernels once the theorem's experimental interface is available.


\subsection{Proof of Proposition~\ref{prop:nonid}}
\label{app:proof:prop:nonid}

\begin{proof}
We show that when contexts are unobserved and no $\beta$-level interventions are available, the model admits a context-reparameterization symmetry that cannot be broken by $V$-level data.

Fix any POSCM $\mathcal{M}$ and any diffeomorphism $\gamma:\mathcal{B}\to\mathcal{B}$.
We construct a second POSCM $\mathcal{M}'$ with different $(\P(A\mid \beta),\{\phi_i\},\Gamma)$ that is $(\mathcal{I},\Ocal)$-equivalent to $\mathcal{M}$ for $\mathcal{I}=\mathcal{I}_{\mathrm{obs}}\cup \mathcal{I}_{V\text{-node}}\cup \mathcal{I}_{V\text{-edge}}$.

\textbf{Construction of $\mathcal{M}'$.}
Let $\beta'_i := \gamma(\beta_i)$ for all $i$ and define $\mathcal{M}'$ via pushforward/composition:
\begin{align}
    \P_{\beta'_1} &:= \gamma_{\#} \P_{\beta_1}, \label{eq:pushforward-root-new}\\
    \P'(A_{ji} = 1 \mid \beta'_j) &:= \P(A_{ji} = 1 \mid \gamma^{-1}(\beta'_j)), \label{eq:kernel-reparam-new}\\
    \phi'_i(\{\beta'_k\}_{k \in \mathrm{Pa}(i)}, U^\beta_i) &:= \gamma\!\left(\phi_i(\{\gamma^{-1}(\beta'_k)\}_{k \in \mathrm{Pa}(i)}, U^\beta_i)\right), \label{eq:context-prop-reparam-new}\\
    \P'(f_i \mid \beta'_i, \mathrm{Pa}_A(i)) &:= \P(f_i \mid \gamma^{-1}(\beta'_i), \mathrm{Pa}_A(i)). \label{eq:mech-assign-reparam-new}
\end{align}
All other components (value-noise measures, etc.) are unchanged, and the measurement model $\Ocal$ is held fixed (Def.~\ref{def:measurement-model}).

\textbf{Equality of observable interventional measures.}
Consider any intervention $\iota\in\mathcal{I}$, which acts only at the $V$ level (Phase~II).
By construction \eqref{eq:pushforward-root-new}--\eqref{eq:mech-assign-reparam-new}, $\mathcal{M}$ and $\mathcal{M}'$ induce the same interventional measure for $(A,V)$ after marginalizing out $\beta$ (since $\beta'$ is a reparameterization of $\beta$ and all kernels/mechanism-assignment factors are composed accordingly).
Under the factorized measurement model of Def.~\ref{def:measurement-model}, the observable channels $\tilde A=\Ocal^{A}(A,U^{\Ocal}_A)$ and $\tilde V=\Ocal^{V}(V,U^{\Ocal}_V)$ do not depend on $\beta$.
When $\beta$ is unobserved, $\Ocal^{\beta}$ outputs nothing, so the induced interventional distributions over observables $(\tilde A,\tilde V)$ coincide under $\mathcal{M}$ and $\mathcal{M}'$ for all $\iota\in\mathcal{I}$.
Therefore $\P(A\mid \beta)$, $\{\phi_i\}$, and $\Gamma$ are not $\mathcal{I}$-identifiable beyond context reparameterization.
\end{proof}

\subsection{Proof of Lemma~\ref{lem:struct_mech_confounding}}
\label{app:proof:lem:struct_mech_confounding}

\begin{proof}
We construct an explicit counterexample showing that two POSCMs can agree on all node-level interventional distributions yet disagree on $V$-edge intervention counterfactuals.

\textbf{Setup.} Consider a two-node ordered POSCM with $\tau(1) < \tau(2)$ and a single potential dyad $1 \to 2$. Let contexts be degenerate (omit $\beta$). Let $V_1 \in \{0,1\}$ be an endogenous variable. Let $A_{12} \sim \mathrm{Bern}(p)$ be unobserved.

Define the value mechanism at node $2$ as follows:
\begin{itemize}[nosep]
    \item If $A_{12} = 0$: node $2$ has no parents and outputs $V_2 \sim \mathrm{Bern}(1/2)$.
    \item If $A_{12} = 1$: node $2$ has parent set $\{1\}$ and outputs $V_2 \sim \mathrm{Bern}(q_{V_1})$ for parameters $q_0, q_1 \in (0,1)$.
\end{itemize}

\textbf{Node-level interventional distributions.} For each node intervention $\mathrm{do}(V_1 = v)$, $v \in \{0,1\}$, we have:
\begin{align*}
    \P(V_2 = 1 \mid \mathrm{do}(V_1 = v)) = (1-p) \cdot \tfrac{1}{2} + p \cdot q_v.
\end{align*}

\textbf{Model $\mathcal{M}$.} Fix $p = \tfrac{1}{2}$ and $(q_0, q_1) = (0.2, 0.8)$. This yields:
\begin{align*}
    \P_{\mathcal{M}}(V_2 = 1 \mid \mathrm{do}(V_1 = 0)) &= 0.5 \cdot 0.5 + 0.5 \cdot 0.2 = 0.35, \\
    \P_{\mathcal{M}}(V_2 = 1 \mid \mathrm{do}(V_1 = 1)) &= 0.5 \cdot 0.5 + 0.5 \cdot 0.8 = 0.65.
\end{align*}

\textbf{Model $\mathcal{M}'$.} Choose a different edge probability $p' = 0.8$ and define $(q'_0, q'_1)$ by solving:
\begin{equation*}
    (1 - p') \cdot \tfrac{1}{2} + p' \cdot q'_v = (1 - p) \cdot \tfrac{1}{2} + p \cdot q_v, \quad v \in \{0,1\}.
\end{equation*}

Substituting:
\begin{align*}
    0.2 \cdot 0.5 + 0.8 \cdot q'_0 &= 0.35 \quad \Rightarrow \quad q'_0 = (0.35 - 0.1)/0.8 = 0.3125, \\
    0.2 \cdot 0.5 + 0.8 \cdot q'_1 &= 0.65 \quad \Rightarrow \quad q'_1 = (0.65 - 0.1)/0.8 = 0.6875.
\end{align*}
Both $q'_0, q'_1 \in (0,1)$, so $\mathcal{M}'$ is a valid POSCM.

\textbf{Node-level equivalence.} By construction:
\begin{equation*}
    \P_{\mathcal{M}}(V_2 = 1 \mid \mathrm{do}(V_1 = v)) = \P_{\mathcal{M}'}(V_2 = 1 \mid \mathrm{do}(V_1 = v)) \quad \text{for } v \in \{0,1\}.
\end{equation*}

\textbf{Message-augmented realization (Def.~\ref{def:msg-poscm}).}
To match the edge-intervention definition in Def.~\ref{def:edge-message-interventions},
realize the above mechanism in message-augmented form with message dimension $d_2=2$.
Let $U^V_2 \sim \mathrm{Unif}(0,1)$ be exogenous variable and define the (baseline) value-message primitive
\[
H^{V}_{2\leftarrow 1}(v) := (1,q_v),
\qquad
M^{V}_{2\leftarrow 1} := A_{12} H^{V}_{2\leftarrow 1}(V_1) = (A_{12},A_{12}q_{V_1}),
\]
where the first coordinate records whether the edge is present.
Define the aggregator
\[
F_2\big((m_1,m_2),u\big) := \mathbf{1}\Big\{u < (1-m_1)\tfrac12 + m_2\Big\}.
\]
Then $V_2 = F_2(M^{V}_{2\leftarrow 1},U^V_2)$ yields $V_2\sim\mathrm{Bern}(1/2)$ when $A_{12}=0$
and $V_2\sim\mathrm{Bern}(q_{V_1})$ when $A_{12}=1$, as above.

\textbf{$V$-edge intervention counterfactuals differ.}
Consider the $V$-edge message intervention of Def.~\ref{def:edge-message-interventions}
\begin{equation*}
    \iota := \doop_{V}(1 \to 2;\,\widetilde H^{V}_{2\leftarrow 1}),
\qquad
\widetilde H^{V}_{2\leftarrow 1}(v) := (1,v).
\end{equation*}
Equivalently, when $A_{12}=1$ this replaces $q_v$ by $\tilde q_v=v$ and hence forces $V_2=V_1$.
Because messages are gated by $A_{12}$, $\iota$ is a no-op when $A_{12}=0$.
Under $\iota$, for $v \in \{0,1\}$:
\begin{equation*}
    \P\!\big(V_2 = 1 \mid \iota,\, \doop(V_1 = v)\big)
= (1-p)\cdot\tfrac12 + p\cdot v.
\end{equation*}

In $\mathcal{M}$ (with $p = 0.5$):
\begin{align*}
    \P_{\mathcal{M}}(V_2 = 1 \mid \iota,\, \doop(V_1 = 0)) &= 0.5 \cdot 0.5 + 0.5 \cdot 0 = 0.25, \\
    \P_{\mathcal{M}}(V_2 = 1 \mid \iota,\, \doop(V_1 = 1)) &= 0.5 \cdot 0.5 + 0.5 \cdot 1 = 0.75.
\end{align*}

In $\mathcal{M}'$ (with $p' = 0.8$):
\begin{align*}
    \P_{\mathcal{M}'}(V_2 = 1 \mid \iota,\, \doop(V_1 = 0)) &= 0.2 \cdot 0.5 + 0.8 \cdot 0 = 0.10, \\
    \P_{\mathcal{M}'}(V_2 = 1 \mid \iota,\, \doop(V_1 = 1)) &= 0.2 \cdot 0.5 + 0.8 \cdot 1 = 0.90.
\end{align*}

Since $p \neq p'$, the post-edge-intervention distributions differ between $\mathcal{M}$ and $\mathcal{M}'$.

\textbf{Conclusion.} Node-level interventional data do not determine $V$-edge intervention counterfactuals in the nonparametric latent-edge setting. The underlying ambiguity is \emph{structure-mechanism confounding}: a model with frequent edges and weak edge-gated mechanisms can be observationally equivalent (at the node level) to a model with rare edges and strong edge-gated mechanisms, yet these models make different predictions about what would happen under edge-level interventions.
\end{proof}

\subsection{Proof of Theorem~\ref{thm:complete-id} (Kernel identifiability; no paired-world replay)}
\label{app:proof:thm:complete-id}

\begin{proof}
Let $\mathcal{M}$ and $\mathcal{M}'$ be two POSCMs that are $(\mathcal{I}_{\mathrm{kern}},\Ocal)$-equivalent, where $\mathcal{I}_{\mathrm{kern}} := \mathcal{I}_{\mathrm{obs}} \cup \mathcal{I}_{\beta\text{-node}} \cup \mathcal{I}_{V\text{-node}}$. By Assumptions~\ref{ass:context-readout} and~\ref{ass:value-readout}, equality of interventional distributions over the observable readouts implies equality of the corresponding interventional distributions over $(\beta,V)$ (up to any explicitly stated measurement equivalences). Moreover, by Assumption~\ref{ass:struct-readout} the adjacency $A$ is identifiable under each intervention via the probing-based structure readout (Remark~\ref{rem:probe-readout}), so the interventional distribution of $A$ is identified as well. We therefore reason below as if $(A,\beta,V)$ were available, with $A$ obtained via the probing readout.

\paragraph{Proof template (decoder/inversion $\Rightarrow$ identifiability).}
For each target kernel family in Theorem~\ref{thm:complete-id}, the goal is to show that it is a
(deteministic) function of the restricted interventional distribution family
$\{\P(A,\beta,V\mid \iota)\}_{\iota\in\mathcal{I}_{\mathrm{kern}}}$.
Equivalently, we exhibit an explicit \emph{decoder} (an inversion map) from these distributions to the target
kernel.  Therefore, if $\mathcal{M}$ and $\mathcal{M}'$ are
$(\mathcal{I}_{\mathrm{kern}},\Ocal)$-equivalent, they induce the same restricted interventional distributions,
hence the same decoder output, and thus the same target kernel (up to the measurement equivalences
in Assumptions~\ref{ass:context-readout}--\ref{ass:value-readout}).

We prove identifiability of each kernel family from interventional distributions collected from \emph{independent experimental units} (no paired-world replay).

\paragraph{(i) Structure kernel $\alpha$.}
Fix a source node $j$ and a reachable value $b$. Under the intervention $\doop(\beta_j=b)$ (admissible by Assumption~\ref{ass:coverage}), the Phase~I edge-formation step generates the outgoing edge vector $A_{j,>j}$ using the structure kernel evaluated at $\beta_j=b$. Therefore the interventional distribution
\[
\P(A_{j,>j}\mid \doop(\beta_j=b))
\]
is exactly the interventional supervising measure $\mu_j^{\beta\text{-do}}(b)$ appearing in Theorem~\ref{thm:complete-id}(i). By structure readout (Assumption~\ref{ass:struct-readout}), this measure is identified from the observable interventional distribution for each reachable $b$. Varying $b$ over the intervention-reachable set identifies the interventional supervising measures $\{\mu_j^{\beta\text{-do}}(\cdot)\}_j$ and, in particular, each dyadwise marginal $b\mapsto \P(A_{ji}=1\mid \doop(\beta_j=b))$ for $i>j$. Hence $\alpha$ is $(\mathcal{I}_{\mathrm{kern}},\Ocal)$-identifiable on the reachable support.

\paragraph{(ii) Context propagation kernels induced by $\{\phi_i\}$.}
Fix a node $i$ and a parent set $S$ with $\P(\Pa_A(i)=S\mid \iota)>0$ for some $\iota\in \mathcal{I}_{\mathrm{kern}}$. Consider a joint $\beta$-node intervention $\doop(\beta_S=b_S)$, admissible by Assumption~\ref{ass:coverage}. Under ordered generation, the Phase~I context mechanism generates
\[
\beta_i \;=\; \phi_i\!\bigl(\beta_S,\,U^\beta_i\bigr)
\qquad\text{on the event }\{\Pa_A(i)=S\}.
\]
Thus, conditioning on $\Pa_A(i)=S$ (which is identifiable by Assumption~\ref{ass:struct-readout}) and on the intervention value $\beta_S=b_S$, the conditional interventional distribution
\[
\P\!\bigl(\beta_i \mid \doop(\beta_S=b_S),\ \Pa_A(i)=S\bigr)
\]
coincides with the kernel $K^\beta_{i,S}(\cdot\mid b_S)$ in Theorem~\ref{thm:complete-id}(ii). By context readout (Assumption~\ref{ass:context-readout}) and structure readout (Assumption~\ref{ass:struct-readout}), this conditional distribution is identified from interventional data for each reachable $b_S$; varying $b_S$ over the reachable set identifies $K^\beta_{i,S}$ on its reachable support. Equivalently, $\P(\beta_i\mid \beta_{\Pa_A(i)},\Pa_A(i))$ is identified (up to any context reparameterization/gauge equivalences discussed in the main text when interventions are not anchored).

No replay is required: conditioning on identified $(A,\beta_S)$ across independent units plays the same isolating role that replay played in the original paired-world argument.

\paragraph{(iii) Endogenous-variable mechanism kernels induced by $\Gamma$.}
Fix a node $i$ and a parent set $S$ in the support of $\Pa_A(i)$ under $\mathcal{I}_{\mathrm{kern}}$. Consider a $V$-node intervention $\doop(V_S=v_S)$, admissible by Assumption~\ref{ass:parentval}. Since $V$-node interventions occur in Phase~II, they do not alter Phase~I variables $(A,\beta)$, so we may condition on $\beta_i=b_i$ and $\Pa_A(i)=S$ (identifiable by Assumptions~\ref{ass:struct-readout} and~\ref{ass:context-readout}).

Under the Phase~II value-generation step, conditional on $\beta_i=b_i$ and $\Pa_A(i)=S$, the node-$i$ mechanism is sampled via $\Gamma$ and then applied to the intervened parent values:
\[
f_i \sim \P(\,\cdot \mid \beta_i=b_i,\Pa_A(i)=S),\qquad
V_i = f_i(v_S,U^V_i).
\]
Therefore the conditional interventional distribution
\[
\P\!\bigl(V_i \mid \doop(V_S=v_S),\ \beta_i=b_i,\ \Pa_A(i)=S\bigr)
\]
coincides with $K^V_{i,S}(\cdot\mid v_S,b_i)$ in Theorem~\ref{thm:complete-id}(iii). By value readout (Assumption~\ref{ass:value-readout}) together with structure/context readout, this conditional distribution is identified for each reachable $(v_S,b_i)$; varying $(v_S,b_i)$ over the intervention-reachable set identifies $K^V_{i,S}$ on its reachable support. Equivalently, $\P(V_i\mid V_{\Pa_A(i)},\beta_i,\Pa_A(i))$ is identified. Note that if $\Gamma$ is stochastic, $K^V_{i,S}$ is a \emph{population} conditional distribution that averages over the mechanism draw $f_i$; repeated probing of a single stationary instance instead identifies the corresponding \emph{unit-specific} response kernel (Proposition~\ref{prop:instance-id}(iii)).

Again, no replay is required: the causal isolation comes from (a) ordered generation, and (b) the ability to fix parent values and condition on $\Pa_A(i)$ across independent experimental units.
\end{proof}

\subsection{Proof of Theorem~\ref{thm:msg-id} (Dyadic message identifiability)}
\label{app:proof:thm:msg-id}
\begin{proof}
We sketch the argument for the \emph{value} message primitives; the context case is analogous.

Fix a target node $i$ and an upstream node $j<i$. In a message-augmented representation (Def.~\ref{def:msg-poscm}), the value-update admits a decomposition of the form
\[
M^{V}_{i\leftarrow k} \,=\, A_{ki}\,H^{V}_{i\leftarrow k}(V_k),
\qquad
V_i \,=\, F_i\!\bigl(\{M^{V}_{i\leftarrow k}\}_{k<i},\,U^V_i\bigr),
\]
up to representation gauge (Remark~\ref{rem:kas-gauge}).

Throughout, we condition on a realized parent set $S=\Pa_A(i)$ and on a fixed context value $\beta_i=b$; these conditioning events are operational under Assumptions~\ref{ass:struct-readout} and~\ref{ass:context-readout}.

\paragraph{Route A (joint node controls; no replay).}
Assume value-intervention coverage (Assumption~\ref{ass:parentval}) and edge-intervention richness (Assumption~\ref{ass:edge-richness}).
Fix a realized parent set $S=\Pa_A(i)$ and fix parent values $v_S$ via a joint $V$-node intervention $\doop(V_S=v_S)$. By structure readout we may condition on $\{\Pa_A(i)=S\}$ and (if needed) on $\{A_{ji}=1\}$.

Under the baseline (no edge intervention), the $j$th message slot equals $m^\star:=H^{V}_{i\leftarrow j}(v_j)$, and all other slots are fixed at $\{H^{V}_{i\leftarrow k}(v_k)\}_{k\in S\setminus\{j\}}$ or $0$ for $k\notin S$. Therefore, for fixed $(v_S,b,S)$ the conditional distribution of $V_i$ is the pushforward of $U^V_i$ through the (unknown) aggregator at message value $m^\star$:
\[
\P\!\bigl(V_i \mid \doop(V_S=v_S),\ \beta_i=b,\ \Pa_A(i)=S,\ A_{ji}=1\bigr)
\;=\;
\P\!\bigl(F_i(\ldots,m^\star,\ldots,U^V_i)\mid \beta_i=b,\ \Pa_A(i)=S\bigr).
\]

Now apply a $V$-edge message intervention on $j\to i$ (Def.~\ref{def:edge-message-interventions}) that replaces $H^{V}_{i\leftarrow j}$ by a \emph{known} constant clamp $\widetilde H(\cdot)\equiv m$. Under the same joint parent-value control and the same conditioning on $(\beta_i=b,\Pa_A(i)=S,A_{ji}=1)$, the resulting conditional distribution is
\[
\P\!\bigl(V_i \mid \doop_{V}(j\to i;\widetilde H\equiv m),\ \doop(V_S=v_S),\ \beta_i=b,\ \Pa_A(i)=S,\ A_{ji}=1\bigr)
\;=\;
\P\!\bigl(F_i(\ldots,m,\ldots,U^V_i)\mid \beta_i=b,\ \Pa_A(i)=S\bigr).
\]
In particular, taking $m=m^\star$ reproduces the baseline conditional distribution, since $F_i$ depends on the $j$th input only through the resulting message value.
Varying $m$ over a rich set identifies the induced \emph{response kernel} $m\mapsto \P(V_i\mid \doop_V(j\to i;\widetilde H\equiv m),\dots)$ on the reachable support.
By injectivity in Assumption~\ref{ass:edge-richness}, the baseline distribution matches \emph{exactly one} member of this family, so $m^\star$ (and hence $H^{V}_{i\leftarrow j}(v_j)$) is uniquely identified on the reachable support (up to gauge).
Repeating the argument for varying $v_j$ (via $V$-node interventions) identifies the function $H^{V}_{i\leftarrow j}(\cdot)$ on the reachable support, and iterating over dyads identifies all value message primitives.

\paragraph{What is (and is not) identified.} The same experiments identify the induced response kernels of the aggregator in the relevant slot. In general, they do \emph{not} identify the deterministic map $F_i$ separately from the latent noise $U^V_i$ without additional noise-model structure (cf.\ Proposition~\ref{prop:instance-id}(iii)).

\paragraph{Route B (message clamping; no replay).}
This is the matching step in Route~A emphasized as a calibration procedure: for fixed $(v_S,b,S)$, the family of clamped interventional distributions parameterized by $m$ provides a ``response curve'' for the $j$th slot, and the baseline distribution at $\doop(V_j=v)$ selects the unique $m=H^{V}_{i\leftarrow j}(v)$ on the reachable support.

\paragraph{Route C (paired-world replay; idealized).}
Under paired-world replay (Assumption~\ref{ass:paired}), we may evaluate a baseline world and multiple $V$-edge clamps on $j\to i$ on the \emph{same} exogenous draw.
Within a replay block $\omega$, all non-descendants of the intervention target (including $A$, all $\beta$, and all upstream $V_k$ for $k<i$) are identical across worlds, and the realized value noise $U^V_i(\omega)$ is shared.
Fix a block with $A_{ji}=1$ and $\Pa_A(i)=S$, and write $v_j:=V_j(\omega)$ and $m^\star:=H^{V}_{i\leftarrow j}(v_j)$. Then, for each clamp value $m$ we observe
\[
V_i^{(m)}(\omega)\;=\;F_i(\ldots,m,\ldots,U^V_i(\omega)).
\]
By the \emph{pointwise} distinguishability condition in Assumption~\ref{ass:edge-richness} (Replay variant), the map $m\mapsto V_i^{(m)}(\omega)$ is injective for almost every block, so matching the baseline value $V_i^{(\mathrm{base})}(\omega)=V_i^{(m^\star)}(\omega)$ to the clamped curve identifies $m^\star$ within that block.
Repeating across replay blocks whose $V_j$ values range over the intervention-reachable support (either by natural variation or by intervening on $V_j$) identifies $H^{V}_{i\leftarrow j}(\cdot)$ pointwise on that support, up to gauge, \emph{without} requiring joint interventions to fix the other parent values.

This completes the proof sketch for value messages; the context-message case is analogous.
\end{proof}

\subsection{Proof of Proposition~\ref{prop:instance-id} (Instance-level identification)}
\label{app:proof:prop:instance-id}

\begin{proof}
We justify each item of Proposition~\ref{prop:instance-id}.

\paragraph{(i) Identifying $A^\star$ by repeated probing (stationary instance).}
Fix a dyad $j<i$. By joint value interventions (Assumption~\ref{ass:parentval}), we may set $V_j=v$ while clamping the other potential parents $V_{-j}:=(V_k)_{k<i,\,k\neq j}$ to an arbitrary assignment $v_{-j}$, i.e., apply $\doop(V_j=v,\ V_{-j}=v_{-j})$. By Assumption~\ref{ass:struct-readout}, repeated probes under a fixed intervention resample Phase~II value noise and measurement noise i.i.d., so we obtain i.i.d.\ samples from the induced conditional distribution of $V_i$ (or its readout, by Assumption~\ref{ass:value-readout}).

If $A^\star_{ji}=0$, then the Phase~II structural equation for $V_i$ does not take $V_j$ as an argument; once $V_{-j}$ is clamped, changing $v$ cannot change the distribution of $V_i$. If instead $A^\star_{ji}=1$, controlled edge-faithfulness provides values $v\neq v'$ and some clamp $v_{-j}$ such that the induced distributions under $\doop(V_j=v,\ V_{-j}=v_{-j})$ and $\doop(V_j=v',\ V_{-j}=v_{-j})$ differ.

With sufficiently many repeated probes per intervention setting, empirical convergence together with any consistent two-sample test distinguishes equality versus inequality of these induced distributions, yielding a consistent decision rule for the edge indicator $A^\star_{ji}$. Applying this decision over all dyads recovers $A^\star$.
\paragraph{(ii) Identifying $f^\star$ when $\Gamma$ is deterministic.}
If $\Gamma$ is deterministic given $(\beta_i,\Pa_A(i))$, then once $\beta^\star$ and $A^\star$ are identified we have
\[
f_i^\star = \Gamma(\beta_i^\star,\Pa_{A^\star}(i)).
\]
No additional data beyond identifying $(A^\star,\beta^\star)$ are required.

\paragraph{(iii) Stochastic $\Gamma$: identifying the realized response kernel vs.\ the structural function.}
If $\Gamma$ is stochastic, then $f_i^\star$ is a latent draw that is fixed across probes. For any probed parent-value assignment $v$ (implemented via $\doop(V_{\Pa_{A^\star}(i)}=v)$), repeated probes yield i.i.d.\ samples from
\[
\P\!\bigl(V_i \mid \doop(V_{\Pa_{A^\star}(i)}=v),\,f_i^\star\bigr),
\]
hence the unit-specific response kernel $v\mapsto \P(V_i\mid \doop(V_{\Pa_{A^\star}(i)}=v),f_i^\star)$ is identified on the probed domain in the large-sample limit.


We can make the ``observational equivalence on $\mathcal{V}_{\mathrm{probe}}$'' precise by defining
\[
f_i \sim_{\mathrm{probe}} f_i'
\quad\Longleftrightarrow\quad
\forall v\in\mathcal{V}_{\mathrm{probe}},\;
\P(V_i\mid \doop(V_{\Pa_{A^\star}(i)}=v),f_i)
=
\P(V_i\mid \doop(V_{\Pa_{A^\star}(i)}=v),f_i').
\]
The condition is that the map
\(f_i \mapsto \{ \P(V_i\mid \doop(V_{\Pa_{A^\star}(i)}=v),f_i)\}_{v\in\mathcal{V}_{\mathrm{probe}}}\)
is injective modulo $\sim_{\mathrm{probe}}$, and the conclusion is that the identified response kernel pins
down $f_i^\star$ up to $\sim_{\mathrm{probe}}$.

Finally, identifying the deterministic map $f_i^\star(\cdot,\cdot)$ itself (separating it from value noise) is strictly stronger: without additional restrictions on how noise enters, many different deterministic functions can induce the same conditional distribution. Standard sufficient conditions include noise-free mechanisms, known additive noise, monotone structural quantiles, or a known parametric mechanism family; in these cases $f_i^\star$ becomes identifiable from the collection of interventional response distributions.
\end{proof}

\begin{table}[t]
\centering
\begin{tabular}{p{0.23\linewidth}p{0.31\linewidth}p{0.36\linewidth}}
\toprule
Primitive & POSCM semantics & Scientific reading \\
\midrule
$\doop(V_j=v)$ & Clamp a Phase-II value; keep $(A,\beta,f)$ fixed & Voltage clamp, expression/input clamp, or activity clamp. \\
$\doop(\beta_j=b)$ & Set pre-formation context; regenerate downstream structure/mechanisms & Developmental, genetic, cell-fate, eccentricity, or accessibility manipulation. \\
$\doop_V(j\to i;\widetilde H)$ & Replace one value-message channel & Synapse/channel conductance or dyadic transfer perturbation. \\
$\doop_\beta(s\to t;\widetilde H)$ & Replace one context-message channel & Local perturbation of a developmental or signaling channel. \\
\bottomrule
\end{tabular}
\caption{Scientific readings of POSCM intervention primitives. A $\beta$-intervention is a pre-formation manipulation; it should not be read as acutely turning an adult cell of one type into another.}
\label{tab:intervention-examples}
\end{table}

\section{Gene-regulatory Network Experiments}
\label{app:grn-analogue}

A second POSCM instantiation is a feedforward gene-regulatory network (GRN). Here $\beta_i$ denotes cell state or chromatin accessibility, $A_{ji}$ denotes whether transcription factor $j$ regulates promoter $i$, $\Gamma$ assigns regulatory parameters such as Hill coefficients, activation/repression signs, and binding strengths, and $V_i$ is gene expression. A $V$-node intervention corresponds to an expression, production, or input clamp; a $\beta$-node intervention corresponds to a pre-formation accessibility or cell-state manipulation; and a $V$-edge intervention corresponds to changing one TF--promoter regulatory channel.

To check that the retina experiments are not artifacts of a self-authored simulator, we repeated the three validation experiments on top of SERGIO (\texttt{sergio-scsim} 1.9.3), an external single-cell expression simulator whose stochastic differential equations use Hill-type regulatory response functions \citep{dibaeinia2020sergio}. The system is feedforward: transcription factors are generated before their targets, so the ordered-generation semantics is the same as in the retina experiments. The experiments below are controlled validations of POSCM identifiability predictions, not a full noisy-data GRN-recovery pipeline.

\begin{table}[t]
\centering
\begin{tabular}{p{0.28\linewidth}p{0.66\linewidth}}
\toprule
GRN object & POSCM / causal interpretation \\
\midrule
Gene expression & Continuous endogenous variable $V_i$. \\
Transcription factor (TF) & Upstream parent variable regulating a target gene. \\
Regulatory edge & Directed structural contribution, represented by $A_{ji}=1$. \\
Hill parameters $(k,h,n)$ & Mechanism parameters assigned by $\Gamma$: strength, half-activation threshold, and cooperativity. \\
Activator / repressor & Sign and form of the edge-level regulatory effect. \\
Accessibility / cell state & Latent context $\beta_i$ gating edge existence and regulatory role. \\
TF clamp $\doop(V_j=v)$ & $V$-node intervention fixing a regulator's expression or production/input level. \\
Binding-strength replacement & $V$-edge/mechanism intervention changing one TF--promoter channel. \\
\bottomrule
\end{tabular}
\caption{Dictionary between the SERGIO-backed GRN experiments and POSCM notation.}
\label{tab:grn-dictionary}
\end{table}

\paragraph{Shared simulation configuration.}
All quantitative results in this subsection use the full run, not the smaller execution-only smoke runs. The backend is SERGIO with regulator/target decay $0.8$, integration step $dt=0.02$ (\texttt{safety\_iter} 20, \texttt{scale\_iter} 4), and $20\times 20=400$ base cells per condition unless otherwise stated. Experiment~1 uses deterministic SERGIO dynamics plus matched measurement noise to isolate the label-swap symmetry; Experiments~2--3 use SERGIO's native stochastic expression dynamics. Each metrics file records provenance and programmatic pass/fail checks.

\paragraph{GRN Experiment 1: label-blind non-identifiability.}
Two SCMs $M$ and $M'$ differ by a hidden swap of two exchangeable regulators, \texttt{TF0} and \texttt{TF1}. In $M$, \texttt{TF0} is the activator and \texttt{TF1} is the repressor; in $M'$ the roles are swapped. Under $\beta$-blind interventions the clamp targets \texttt{TF0} and \texttt{TF1} with equal probability, so the regulator label remains hidden. The named clamp $\doop(\texttt{TF0}=v)$ is a power check that reveals the label.

\begin{table}[t]
\centering
\begin{tabular}{p{0.34\linewidth}ccc}
\toprule
Condition & KS $D$ & $p$-value & Conclusion \\
\midrule
Observational & $0.000$ & $1.00$ & Indistinguishable \\
$\beta$-blind $\doop(v=0.2)$ & $0.026$ & $0.946$ & Indistinguishable \\
$\beta$-blind $\doop(v=0.5)$ & $0.031$ & $0.830$ & Indistinguishable \\
$\beta$-blind $\doop(v=0.8)$ & $0.023$ & $0.988$ & Indistinguishable \\
$\beta$-blind $\doop(v=1.2)$ & $0.019$ & $0.999$ & Indistinguishable \\
Named $\doop(\texttt{TF0}=0.2)$ & $1.000$ & $1.1\times 10^{-239}$ & Distinguishable \\
Named $\doop(\texttt{TF0}=0.5)$ & $1.000$ & $1.1\times 10^{-239}$ & Distinguishable \\
Named $\doop(\texttt{TF0}=0.8)$ & $0.923$ & $7.1\times 10^{-184}$ & Distinguishable \\
Named $\doop(\texttt{TF0}=1.2)$ & $1.000$ & $1.1\times 10^{-239}$ & Distinguishable \\
\bottomrule
\end{tabular}
\caption{SERGIO GRN Experiment~1: the hidden activator/repressor swap is indistinguishable under observational and $\beta$-blind interventions, while the named-target control separates the two models. The blind-clamp rows pool both regulators and use 800 samples per model; observational and named-control rows use 400 samples per model.}
\label{tab:grn-exp1}
\end{table}

The blind rows pass the non-identifiability check ($p>0.05$), while the named controls pass the power check ($p<10^{-3}$). Thus, the indistinguishability is due to label-blindness rather than an insensitive test.

\paragraph{GRN Experiment 2: structure--mechanism confounding.}
The dense model has 10 regulatory sites at per-site strength $0.12$; the thinned model has 6 sites with compensated per-site strength $0.20$, so both have effective aggregate strength $1.2$. Node interventions fix the regulator's expression, whereas edge interventions replace the per-site regulatory strength.

\begin{table}[t]
\centering
\begin{tabular}{llc}
\toprule
Intervention & Condition & MMD \\
\midrule
Node-do & $v=0.2$ & $0.0184$ \\
Node-do & $v=0.5$ & $0.0268$ \\
Node-do & $v=0.8$ & $0.0103$ \\
Node-do & $v=1.2$ & $0.0074$ \\
Node-do & Mean & $0.0157$ \\
\midrule
Edge-do & $g=0.04$ & $1.1249$ \\
Edge-do & $g=0.08$ & $1.4742$ \\
Edge-do & $g=0.16$ & $1.3776$ \\
Edge-do & $g=0.32$ & $1.1464$ \\
Edge-do & Mean & $1.2808$ \\
\bottomrule
\end{tabular}
\caption{SERGIO GRN Experiment~2: calibrated dense/thinned structures match under node interventions but diverge under edge/mechanism interventions. The edge/node mean-MMD ratio is $81.5$.}
\label{tab:grn-exp2}
\end{table}

The residual node-do MMD is nonzero because SERGIO's dynamics are nonlinear and stochastic. The large edge/node ratio shows the same structure--mechanism confounding pattern as Lemma~\ref{lem:struct_mech_confounding}: setting the parent value cannot reveal how its influence is internally composed, while changing the per-site mechanism exposes the latent structural difference. SERGIO allows one interaction per regulator--target pair, so ``site count'' is encoded as effective interaction strength; this is the intended count/affinity analogue rather than literal duplicate regulatory edges.

\paragraph{GRN Experiment 3: kernel recovery.}
For the structure kernel, edges are sampled from the latent accessibility model
\[
\P(A=1\mid \beta)=\sigma(7.5(\beta-0.48)).
\]
A logistic estimator is fit from 400 binary edge samples at seven training accessibilities $\beta\in\{0.10,0.22,0.35,0.48,0.60,0.75,0.92\}$ and evaluated on five disjoint held-out accessibilities. It recovers $\sigma(7.73\beta-3.70)$, close to the ground-truth $\sigma(7.5\beta-3.60)$.

\begin{table}[t]
\centering
\begin{tabular}{cccc}
\toprule
Held-out $\beta$ & Predicted $\P(A=1)$ & Ground truth & Absolute error \\
\midrule
$0.20$ & $0.1041$ & $0.1091$ & $0.0050$ \\
$0.40$ & $0.3530$ & $0.3543$ & $0.0013$ \\
$0.55$ & $0.6351$ & $0.6283$ & $0.0068$ \\
$0.70$ & $0.8474$ & $0.8389$ & $0.0085$ \\
$0.85$ & $0.9466$ & $0.9413$ & $0.0053$ \\
\bottomrule
\end{tabular}
\caption{SERGIO GRN Experiment~3a: held-out structure-kernel recovery from binary edge samples. Held-out MAE is $0.0054$ and held-out $R^2$ is $0.99963$.}
\label{tab:grn-exp3-structure}
\end{table}

A Monte-Carlo sanity check comparing sampled edge means to the known generating probabilities gives MAE $0.0045$; this is reported only as a sampling check, not as the identification result. For the mechanism sanity check, parent clamps $v\in\{0.05,0.2,0.4,0.7,1.0,1.4\}$ recover the normalized Hill dose--response shape with $R^2=0.99987$ after shifting/rescaling the response axis. Thus, the context-to-structure kernel is genuinely identified out of sample, while the mechanism clamp-response is reported only as a shape sanity check rather than a fitted-parameter estimator.

\paragraph{Takeaway.}
Across an independent single-cell simulator, the three GRN experiments reproduce the same causal pattern as the retina experiments: label-blind hidden-context swaps remain indistinguishable, node interventions can hide structure--mechanism differences that edge/mechanism interventions expose, and context/accessibility variation identifies the structure kernel on held-out support. The SERGIO results therefore strengthen the paper's validation while preserving the distinction between identification claims and simulator-specific sanity checks.

\subsection{Tables}

\begin{table}[t]
\centering
\begin{tabular}{lll}
\toprule
Parameter & Value & Description \\
\midrule
Patch size & $30 \times 30~\mu\text{m}$ & Reduced for tractability \\
Eccentricity & $-1.2$\,mm & Mid-peripheral retina \\
Simulation time & 200\,ms & Per trial (stimulus epoch) \\
Time step & 0.1\,ms & NEURON integration step \\
Stimulus & 120\,pA flash & Full-field; rods and cones \\
Seeds & 2 & Independent network realizations \\
\bottomrule
\end{tabular}
\caption{Shared simulation parameters (unless otherwise noted).}
\label{tab:retina-shared-params}
\end{table}

\begin{table}[t]
\centering
\begin{tabular}{ccc}
\toprule
BC clamp $v$ (mV) & Mean $\Delta \bar V_{\mathrm{RGC}}$ (mV) & Std (mV) \\
\midrule
$-70$ & $-0.284$ & $0.224$ \\
$-60$ & $-0.287$ & $0.212$ \\
$-50$ & $-0.296$ & $0.199$ \\
$-40$ & $-0.095$ & $0.049$ \\
$-30$ & $+0.412$ & $0.034$ \\
$-20$ & $+0.460$ & $0.031$ \\
\bottomrule
\end{tabular}
\caption{Experiment~3 numerical voltage-sweep values used in Fig.~\ref{fig:retina-transfer-curve}.}
\label{tab:retina-exp3-transfer}
\end{table}
\end{document}